\documentclass[lettersize,journal]{IEEEtran}
\usepackage{amsmath,amssymb,amsfonts}
\usepackage{array}
\usepackage[caption=false,font=normalsize,labelfont=sf,textfont=sf]{subfig}
\usepackage{textcomp}
\usepackage{stfloats}
\usepackage{verbatim}
\usepackage{url}
\usepackage{graphicx}
\usepackage[table]{xcolor}
\usepackage{cite}
\usepackage{multirow}
\usepackage{booktabs}
\usepackage{amsthm}
\usepackage{algorithm}
\usepackage{algorithmicx}
\usepackage{algpseudocode}

\newtheorem{theorem}{Theorem}

\newcommand{\Mref}{M_{\mathrm{ref}}}
\newcommand{\Mpro}{M_{\mathrm{pro}}}
\definecolor{sharedbg}{RGB}{226,241,252}
\definecolor{diffbg}{RGB}{254,235,220}
\newcommand{\groupbest}[1]{\textbf{#1}}
\newcommand{\shared}[1]{\begingroup\setlength{\fboxsep}{1pt}\colorbox{sharedbg}{\strut #1}\endgroup}
\newcommand{\diffpart}[1]{\begingroup\setlength{\fboxsep}{1pt}\colorbox{diffbg}{\strut #1}\endgroup}
\graphicspath{{figures/}}

\begin{document}

\title{CBD: API-Only LLM Black-Box Unlearning through Controlled Behavioral Divergence}

\author{Zhiqiang~Xie,~Yijing~Lin,~Zhipeng~Gao,~and~Dong~In~Kim,~\IEEEmembership{Fellow,~IEEE}%
\thanks{Corresponding author: Yijing Lin.}%
\thanks{Zhiqiang Xie, Yijing Lin, and Zhipeng Gao are with the State Key Laboratory of Networking and Switching Technology, Beijing University of Posts and Telecommunications. Yijing Lin is also with the Beijing Advanced Innovation Center for Future Blockchain and Privacy Computing. Email: \{xiezhiqiang, yjlin, gaozhipeng\}@bupt.edu.cn. Dong In Kim is with the Department of Electronic and Electrical Engineering, Sungkyunkwan University, Suwon, South Korea. Email: dongin@skku.edu.}%
\thanks{This work is supported by the National Natural Science Foundation of China (62502041, 92467203, 62372050), the Beijing Advanced Innovation Center for Future Blockchain and Privacy Computing, the Beijing Natural Science Foundation (L251038, L244010), the CCF-Huawei Populus Grove Fund (TC202418), the Fellowship of China National Postdoctoral Program for Innovative Talents (BX20240045), and the China Postdoctoral Science Foundation General Program (2025M773481).}}

\maketitle
\raggedbottom

\begin{abstract}
Edge devices increasingly invoke large language models (LLMs) through API services for context-aware edge intelligence, while edge-generated data may be collected to improve foundation LLMs and may introduce sensitive, copyrighted, harmful, or outdated information into model behavior.
Machine unlearning has therefore become a practical way to remove the influence of undesired data without retraining LLMs from scratch.
However, existing LLM unlearning methods still face two key gaps in practical deployment. The first is how to achieve unlearning under API-only black-box access, where target-model parameters and internal logits are unavailable. The second is how to preserve retained utility when unlearning target data and retained data share highly similar prompt structures or semantic patterns. 
To address these challenges, we propose Controlled Behavioral Divergence (CBD), an API-only black-box unlearning framework.
Specifically, CBD uses two auxiliary models to create controlled behavioral divergence between retained inputs and unlearning target inputs, converts this divergence into an unlearning-relevance score, and routes unlearning-related prompts away from the target LLM. 
Furthermore, to improve discrimination accuracy under high similarity between unlearning target data and retained data, CBD constructs a gradient-statistics-based discriminative basis by estimating empirical Fisher matrices and solving a regularized generalized eigenvalue problem, guiding the unlearning signal toward target-specific information rather than shared prompt structures. 
Compared with eleven representative white-box and gray-box unlearning baselines across multiple benchmark settings, CBD achieves a better unlearning-utility trade-off and its performance varies little across hyperparameter settings. On ToFU forget10, CBD approaches the retrained reference on the forget set while raising model utility to 74.90, about 15\% above the second-best baseline. On WMDP, it lowers hazardous-knowledge accuracy to 25.68, near the random-guess level, while preserving an MMLU accuracy of 52.67. Our code is available at \url{https://github.com/DGL-codes/CBD}.
\end{abstract}

\begin{IEEEkeywords}
Machine Unlearning, Large Language Models, Black-Box Unlearning, Auxiliary Models, Behavioral Divergence
\end{IEEEkeywords}

\newpage
\section{Introduction}

\IEEEPARstart{L}{arge} language models are increasingly invoked by edge devices through API services to support context-aware edge intelligence under resource-constrained conditions~\cite{qu2025mobile}.
In this workflow, user queries, feedback, and interaction logs generated at the edge may be collected to improve the performance of the foundation LLM in edge-specific scenarios. 
However, such use of edge data may inadvertently introduce sensitive user data, copyrighted materials, and potentially harmful or outdated information into the LLM's parametric memory~\cite{carlini2021extracting}. 
Although retraining the foundation LLM from scratch after removing the undesired data provides the most thorough solution, it is computationally prohibitive due to the massive parameter scale and training corpus of modern LLMs.
In this context, machine unlearning has been proposed to selectively remove such influence while preserving model utility on downstream tasks~\cite{yao2024machine}.

\begin{figure}[!t]
    \centering
    \includegraphics[width=0.92\columnwidth]{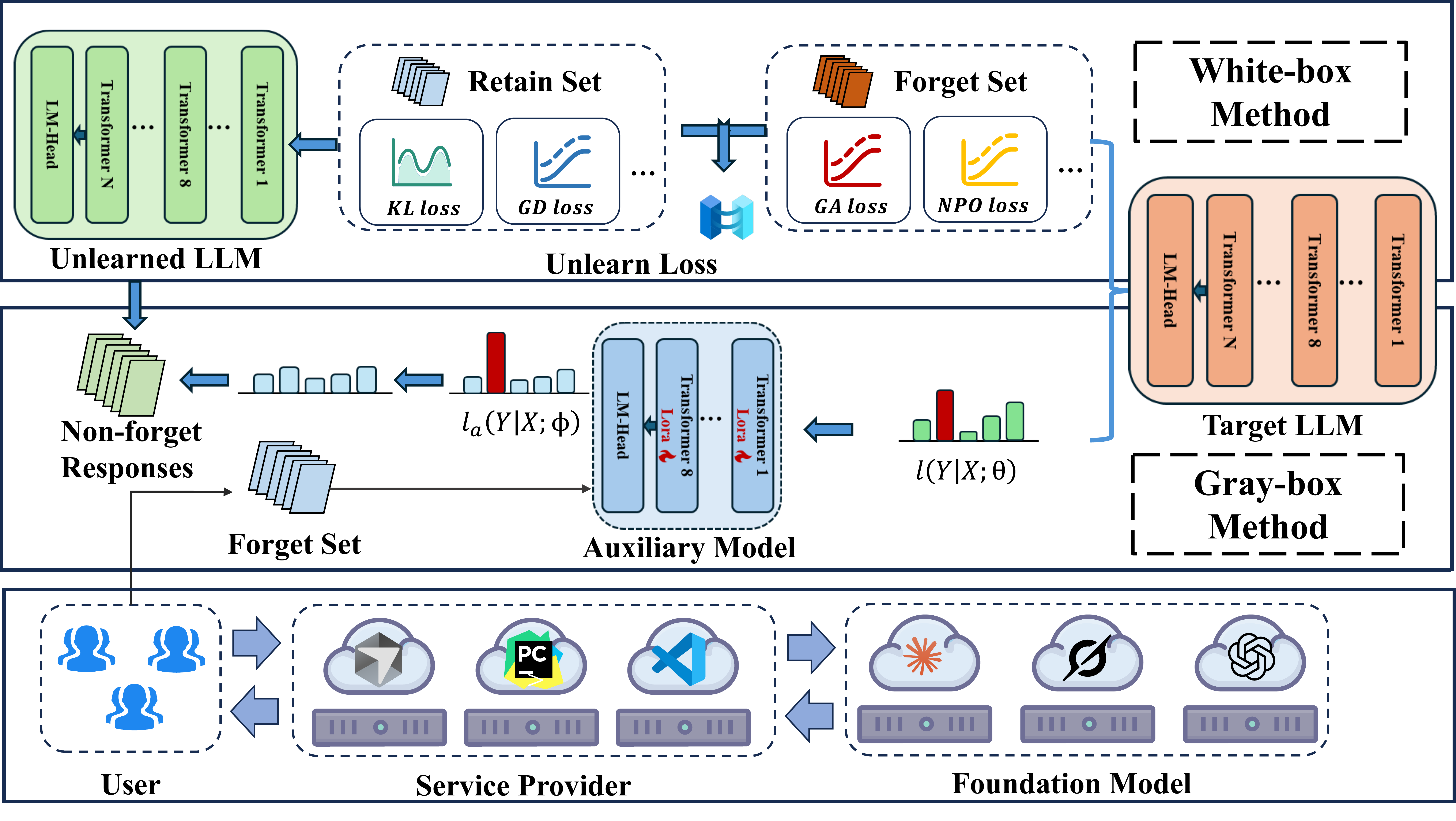}
    \caption{Architecture comparison of machine unlearning methods for LLMs.}
    \label{fig:intro}
\end{figure}

Machine unlearning for LLMs can be viewed as a multi-objective optimization problem~\cite{liu2025rethinking}, as it requires simultaneously removing the influence of target data while preserving model performance on retained data. Existing approaches fall into two groups, as illustrated in Figure~\ref{fig:intro}. First, continual-training-based unlearning methods employ bidirectional gradient optimization strategies, performing gradient ascent or its variants on the forget set to reverse parameter updates induced by forget data, while applying Kullback-Leibler (KL) divergence or gradient descent on the retain set to preserve model performance on retained data~\cite{yao2024machine,yao2024large,zhang2024negative}. Second, auxiliary-model-based methods introduce additional models to calibrate the log probabilities of target models, thereby reducing the output probabilities of forget-related tokens at lower training cost. Representative methods include Unlearning from Logit Difference (ULD)~\cite{ji2024reversing} and offset unlearning~\cite{huang2024offset}.
However, current methods have yet to address black-box LLM unlearning, and they still degrade retained utility when forget and retain data are highly similar. In this paper, we aim to overcome the following two challenges.

The first challenge is how to achieve LLM unlearning in a black-box setting where the target model is exposed only through API calls. In the edge-service workflow described above, edge devices and downstream service components usually receive only final responses from the LLM, without access to its parameters, gradients, or internal token probabilities. This restriction makes white-box methods that update target-model parameters~\cite{yao2024machine,yao2024large,zhang2024negative} and gray-box methods that correct target-model logits~\cite{ji2024reversing,huang2024offset} inapplicable. Beyond the interface limitation, updating or fine-tuning a deployed foundation LLM often involves validation, approval, and redeployment procedures, which makes it difficult to satisfy time-sensitive data removal requests. Therefore, API-only edge LLM services require an unlearning mechanism that can reduce the exposure of undesired data without retraining or editing the target model, or inspecting its internal states.

The second challenge is how to preserve retained utility while unlearning target data. In many LLM unlearning scenarios, the data requested for unlearning and the retained data may share similar topics, prompt templates, or semantic structures, differing only in specific entities or attributes. As illustrated in Table~\ref{tab:high_similarity_examples}, such high structural similarity makes unlearning targets and retained samples difficult to separate, so updates intended to suppress the target data may also alter the model behavior associated with retained data~\cite{yao2024machine,yao2024large,zhang2024negative}. As the unlearning set grows, the unlearning algorithm may increasingly attribute the removal target to shared prompt templates rather than to target-specific content, degrading performance on retained samples with similar prompt structures. Therefore, effective LLM unlearning requires a mechanism that separates target-specific content from shared prompt structures, thereby preserving retained utility during the unlearning process.

To address the above challenges, we propose \textbf{C}ontrolled \textbf{B}ehavioral \textbf{D}ivergence (CBD), a black-box unlearning framework that operates in a fully API-only setting.
CBD trains a probe model against a frozen reference model and estimates the relevance of an input prompt to the unlearning target from the behavioral divergence between the two models.
Inspired by Gradient Projection Memory~\cite{saha2021gradient}, we constrain the probe update so that the two auxiliary models remain close on retained inputs while becoming distinguishable on unlearning target inputs, and the resulting unlearning-relevance score routes unlearning-related prompts away from the target LLM.
Because a retain-only constraint becomes unreliable when unlearning target data and retained data are highly similar, CBD further estimates unlearning-side and retain-side empirical Fisher matrices from sample gradients, formulates direction selection as a regularized generalized eigenvalue problem, and restricts the probe update to directions that induce strong behavioral change on target data at limited cost on retained data.
The main contributions of this paper are summarized as follows.
\begin{itemize}
\item We propose CBD, an API-only black-box unlearning framework that reduces the exposure of the data requested for removal without accessing target-model parameters, gradients, or internal logits.
\item We design a dual-auxiliary-model mechanism that uses controlled behavioral divergence to identify unlearning-related prompts and route them away from the target LLM.
\item We develop a gradient-statistics-based discriminative basis extraction method that improves discrimination accuracy under high similarity between unlearning target data and retained data, thereby preserving retained utility.
\item We evaluate CBD on benchmark unlearning datasets against white-box and gray-box baselines, showing an improved unlearning-utility trade-off under the API-only black-box setting.
\end{itemize}

The remainder of this paper is organized as follows. Section~\ref{sec:related} reviews related work, and Section~\ref{sec:preliminaries} revisits the white-box and gray-box paradigms and formalizes the API-only setting. Section~\ref{sec:method} presents the query routing procedure of CBD, and Section~\ref{sec:subspace} develops the discriminative Fisher basis for the high-similarity regime. Section~\ref{sec:exp} reports the experiments, and Section~\ref{sec:conclusion} concludes the paper.

\section{Related Work}
\label{sec:related}

Machine unlearning aims to remove the influence of designated data from a trained model so that the resulting model behaves as if the data had never been used during training~\cite{bourtoule2021machine}. For LLMs, this goal is difficult to achieve by exact retraining because the original corpus is large, the training cost is high, and the learned representations are highly entangled~\cite{yao2024machine,yao2024large,liu2025rethinking}. Existing LLM unlearning methods can be organized by the interface they assume, where the first line directly updates the target LLM and the second line avoids full target retraining by using auxiliary models, logits, prompts, embeddings, or in-context data at inference time. CBD is closest to the second line in motivation, but targets a stricter API-only setting in which the target LLM provides only final responses and is not logit-observable.

\subsection{White-Box Machine Unlearning}
White-box LLM unlearning directly changes the target model after an unlearning request arrives. A common starting point is to reverse the training signal on the forget data. Jang et al.~\cite{jang2023knowledge} showed that gradient ascent on target token sequences can reduce memorized private knowledge, which made direct loss maximization a standard baseline. Chen and Yang~\cite{chen2023forget} introduced lightweight unlearning layers and selective teacher-student objectives to reduce the cost of full-model updates. Yao et al.~\cite{yao2024machine} further benchmarked several first-order unlearning strategies for pre-trained LLMs. Related studies formulate deletion objectives for sensitive information extraction attacks~\cite{patil2024sensitive} and practical knowledge unlearning settings~\cite{tian2024forget}, showing that the unlearning-utility trade-off is sensitive to the objective, data split, and hyperparameters.

Subsequent work improves this direct-update paradigm by changing the optimization objective or by selecting more targeted update components. Large Language Model Unlearning~\cite{yao2024large} treats the problem as a broader LLM safety task and examines the conflict between removing specific knowledge and preserving general capability. Negative Preference Optimization (NPO)~\cite{zhang2024negative} replaces pure ascent with a preference-style objective, and later work revisits this objective to reduce reference-model bias and simplify the update rule~\cite{fan2025simplicity}. Other methods use second-order information~\cite{jia2024soul}, strategic weight attribution~\cite{jia2024wagle}, uniform-target self-distillation~\cite{vasilev2025unilogit}, or general enhancement frameworks for fine-tuning-based unlearning~\cite{ren2025framework}. Continual and domain-specific variants further study repeated requests~\cite{gao2025continual}, copyright removal~\cite{dou2025copyright}, stealthy knowledge concealment~\cite{gu2025concealment}, selective token-level forgetting~\cite{wan2025token}, self-generated forget data~\cite{xie2025reveal}, and belief-space rectification for reasoning~\cite{niwa2025belief}. These methods also motivate the white-box baselines used in this paper. However, their implementation premise is still white-box access, since the service provider must be able to update target-model parameters, compute gradients, and validate the edited model before redeployment. Such access is unavailable when the LLM is consumed only through an external API.

\subsection{Auxiliary and Inference-Time Machine Unlearning}
Another line of work reduces or avoids direct target-model editing by moving the unlearning effect to auxiliary branches or inference-time mechanisms. ULD~\cite{ji2024reversing} trains an assistant LLM with reversed forget-retain objectives and obtains the unlearned distribution through the logit difference between the target model and the assistant model. Offset unlearning~\cite{huang2024offset} learns a logit offset from a pair of smaller models and transfers the correction to black-box LLM services. Both are more deployment-oriented than direct retraining, but their unlearning effect is still expressed through correction distributions or offsets learned from auxiliary logits, rather than through a pure query-response routing mechanism.

Other methods operate on the prompt or context instead. In-context unlearning~\cite{pawelczyk2024incontext} shows that selected examples with altered labels can induce instance-level forgetting without parameter updates, and later work shows that such in-context control may conceal rather than truly remove knowledge~\cite{takashiro2025answer}. Soft Prompting for Unlearning (SPUL)~\cite{bhaila2025spul} learns prompt tokens that are prepended to the input to enforce forgetting and preserve retained utility. ECO prompts~\cite{liu2024eco} use a classifier to identify prompts in the scope of unlearning and then apply learned corruptions in the prompt-embedding space. Fast exact unlearning for in-context learning data~\cite{muresanu2025fast} studies removal from an external in-context data mechanism rather than from model weights. These methods show that inference-time control can replace retraining when retraining is impractical, but they still rely on structured prompt control, soft prompts, embedding manipulation, or an explicit in-context data store.

CBD differs from the above approaches in two aspects. First, it does not edit the target LLM and does not require target-model logits, gradients, embeddings, or soft-prompt access, since it only uses auxiliary models to score incoming queries and route unlearning-related ones away from the target API. Second, CBD explicitly addresses the retained-utility problem that arises when forget and retain data share similar prompt structures, designing its discriminative basis so that the auxiliary behavioral divergence reflects target-specific information rather than shared templates. This combination of API-only access and retain-aware discrimination is the gap addressed in this paper.

\section{Preliminaries and Framework}
\label{sec:preliminaries}

We first review the two mainstream unlearning paradigms and clarify why the API-only setting requires a different solution. We use $M$ to denote the target LLM, $\Mref$ and $\Mpro$ to denote the reference and probe models, $D_f$ and $D_r$ to denote the forget and retain sets, and $w_0$ and $w$ to denote the initial and current trainable auxiliary parameters. The main symbols are summarized in Table~\ref{tab:symbols}.

\begin{table}
    \centering
    \caption{Key notations.}
    \label{tab:symbols}
    \setlength{\tabcolsep}{2pt}
    \footnotesize
    \begin{tabular}{@{}>{\centering\arraybackslash}m{0.14\columnwidth} >{\centering\arraybackslash}m{0.20\columnwidth} >{\centering\arraybackslash}m{0.14\columnwidth} >{\centering\arraybackslash}m{0.20\columnwidth}@{}}
        \toprule
        Notation & Definition & Notation & Definition \\
        \midrule
        $M$ & target LLM & \shortstack{$\Mref$\\$\Mpro$} & reference and probe models \\
        $D_f$ & forget set & $D_r$ & retain set \\
        $\theta$ & target-model parameters & $w_0,w$ & auxiliary parameters \\
        $d_{\mathrm{in}},d_{\mathrm{out}}$ & layer input and output dimensions & $d_w$ & trainable parameter dimension \\
        $\mathcal{L}_f,\mathcal{L}_r$ & probe training objectives & $\eta$ & learning rate \\
        $z_M(x)$ & target-model logits & $z_{\mathrm{ast}}(x)$ & assistant logits \\
        $\widetilde{z}(x)$ & modified logits & $A_m,B_m$ & LoRA factors \\
        $\bar g_f$ & projected forget gradient & $g_f,g_r$ & training gradients \\
        $F_f,F_r$ & average Fisher matrices & $G_f,G_r$ & sample-gradient matrices \\
        $U_r,Q$ & retain and discriminative bases & $\mu$ & damping factor \\
        $d_t(x)$ & local divergence & $s(x)$ & unlearning-relevance score \\
        $\mathcal{T}(x)$ & aligned positions & $\tau$ & routing threshold \\
        $\Gamma$ & threshold candidates & $\mathcal{J}$ & calibration objective \\
        $\beta$ & retain-side weight & $\rho$ & retain-side budget \\
        \bottomrule
    \end{tabular}
\end{table}

\subsection{White-Box and Gray-Box Unlearning Methods}
White-box methods update the already trained target model with parameters $\theta$ directly. The simplest strategy is gradient ascent (GA), which increases the answer-side loss on the forget set to push the model away from the data requested for removal~\cite{jang2023knowledge,yao2024machine}. However, pure ascent often degrades performance on retained data severely. To preserve retained performance while forgetting, most white-box methods therefore combine a forget loss with a retain loss into a single objective,
\begin{equation}
    \mathcal{L}_{\mathrm{wb}}(\theta)
    =
    \mathcal{L}_f^{\mathrm{wb}}(\theta,D_f) + \lambda\,\mathcal{L}_r^{\mathrm{wb}}(\theta,D_r),
    \label{eq:white_box_obj}
\end{equation}
where the forget loss $\mathcal{L}_f^{\mathrm{wb}}$ drives the model away from the forget data, the retain loss $\mathcal{L}_r^{\mathrm{wb}}$ preserves behavior on retained data, and $\lambda$ trades off the two. Beyond GA, the forget loss $\mathcal{L}_f^{\mathrm{wb}}$ can be instantiated by NPO, which mitigates the utility loss of pure ascent~\cite{zhang2024negative}, by a preference objective such as Direct Preference Optimization (DPO)~\cite{rafailov2023direct}, or by other targeted update rules~\cite{chen2023forget,kassem2023dememorization,wang2024llm}. The retain loss $\mathcal{L}_r^{\mathrm{wb}}$ is typically a gradient-descent term on retained data, giving the +GD variants, or a KL term that keeps the output distribution close to that of the original model, giving the +KL variants~\cite{yao2024machine}. All of these methods, however, require the target LLM to remain editable throughout unlearning, so this entire paradigm becomes unavailable once the target model is exposed only through an API.

Gray-box methods avoid editing the target model but still rely on its token-level logits. ULD trains an assistant model with reversed unlearning objectives so that the assistant concentrates the behavior to be removed, and subtracts the assistant logits $z_{\mathrm{ast}}(x)$ from the target logits $z_M(x)$~\cite{ji2024reversing},
\begin{equation}
    \widetilde{z}(x)=z_M(x)-\alpha z_{\mathrm{ast}}(x),
    \label{eq:uld}
\end{equation}
where $\alpha$ is a scaling coefficient, so tokens related to the data requested for removal become less likely to be selected during decoding. Offset unlearning maintains a pair of smaller auxiliary models and injects their logit difference as a transferable correction~\cite{huang2024offset},
\begin{equation}
    \widetilde{z}(x)=z_M(x)+\alpha\bigl(z_{\mathrm{ref}}(x)-z_{\mathrm{upd}}(x)\bigr),
    \label{eq:offset}
\end{equation}
where $z_{\mathrm{ref}}(x)$ and $z_{\mathrm{upd}}(x)$ are the logits of the frozen and the unlearning-updated auxiliary models, so the correction shifts the target distribution away from the behavior learned by the updated auxiliary branch. Both corrections exist only at the logit interface, so once the target LLM returns final text responses alone, neither~\eqref{eq:uld} nor~\eqref{eq:offset} can be applied. The next subsection formalizes this API-only setting.

\subsection{API-Only Scenario and Proposed Framework}

\begin{figure*}[!t]
    \centering
    \includegraphics[width=0.86\textwidth]{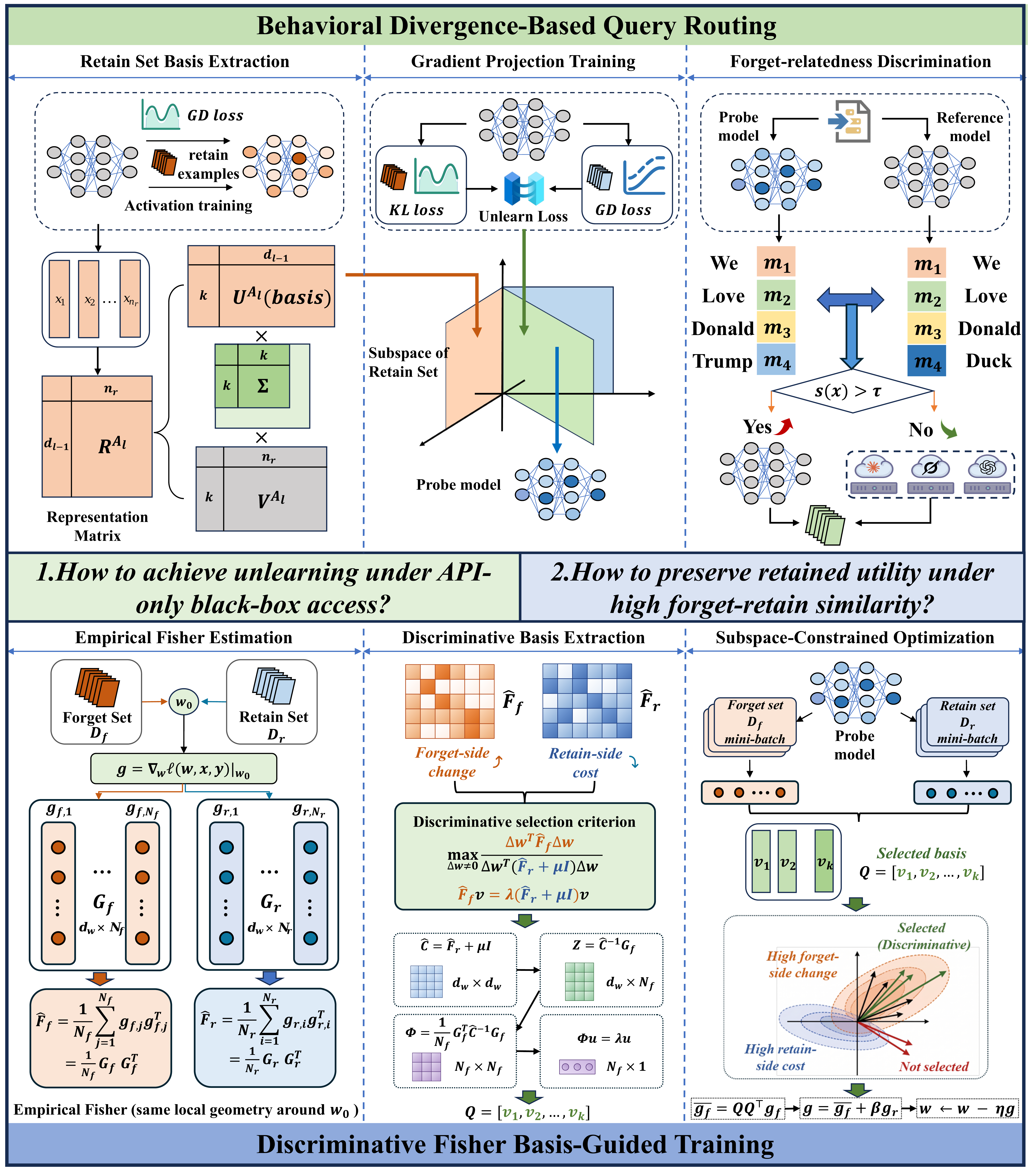}
    \caption{Framework of Controlled Behavioral Divergence (CBD) for black-box unlearning.}
    \label{fig:framework}
\end{figure*}

We consider an API-only deployment in which the target LLM $M$ is accessible only through query-response interaction. The unlearning request specifies a forget set $D_f=\{(x_j,y_j)\}_{j=1}^{N_f}$ and a retain set $D_r=\{(x_i,y_i)\}_{i=1}^{N_r}$. Under this restriction, the two conventional paradigms described above are both unavailable. White-box methods cannot be used because the target model cannot be edited directly, and gray-box methods cannot be used because the internal logit interface does not exist. Consequently, the problem is no longer how to overwrite target-model parameters, but how to achieve black-box unlearning using only the API response interface.

CBD uses three model roles (as illustrated in Figure~\ref{fig:framework}). The target model $M$ remains unchanged and continues to serve ordinary queries. A frozen reference model $\Mref$ provides the reference behavior. A trainable probe model $\Mpro$ is optimized to remain close to $\Mref$ on retained inputs but separate from $\Mref$ on forget inputs. Note that $\Mpro$ is not trained as a replacement for the target LLM but works with $\Mref$ to score whether an input is associated with the forget set. Both auxiliary models are initialized from a small base model that has never been trained on $D_f$, so the responses of $\Mref$ are less likely to reproduce the forget data directly, although this does not guarantee that $\Mref$ is free of related knowledge acquired during its own pretraining. CBD thus reduces the exposure of the data requested for removal at the service interface rather than erasing it from the target parameters, assuming the routing layer mediates all access to the target LLM.

The procedure proceeds in four steps. We first train $\Mpro$ under forget and retain objectives, then convert the divergence between $\Mref$ and $\Mpro$ into an unlearning-relevance score $s(x)$, calibrate a threshold $\tau$ on validation forget and retain samples, and finally route each query according to
\begin{equation}
    \mathrm{CBD}(x)=
    \begin{cases}
        \Mref(x), & s(x)>\tau,\\
        M(x), & s(x)\le \tau.
    \end{cases}
    \label{eq:routing}
\end{equation}
The remaining question is how to make this routing rule reliable when forget and retain data are highly similar, because weak separation directly undermines black-box unlearning. Section~\ref{sec:method} defines the CBD procedure from probe training to query routing. Section~\ref{sec:subspace} defines the discriminative basis used by the projected probe update.

\section{Behavioral-Divergence-Based Query Routing}
\label{sec:method}

Since an API-only LLM service exposes only final responses, unlearning methods that edit target-model parameters or correct target logits cannot be applied directly. To address this limitation, CBD adopts a dual-auxiliary-model mechanism for black-box unlearning, which uses the behavioral divergence between a frozen reference model $\Mref$ and a trainable probe model $\Mpro$ to identify unlearning-related queries during inference. Based on the resulting unlearning-relevance score, CBD routes unlearning-related queries to $\Mref$ and other queries to the target LLM $M$, thereby reducing the exposure of the data requested for removal without changing the target model. When forget and retain inputs are highly similar, the basic divergence signal may become insufficiently separable, which motivates the discriminative Fisher basis developed in Section~\ref{sec:subspace}.

\subsection{Dual Auxiliary Models and Probe Objective}

The reference model $\Mref$ is kept fixed as the behavioral reference, whereas the probe model $\Mpro$ is updated so that forget-related inputs become more distinguishable from retained inputs when compared with $\Mref$. To keep the auxiliary computation lightweight, both models are implemented with LoRA adapters~\cite{hu2022lora}, while only the probe-side adapters are updated during training. For a linear layer with frozen weight matrix $W_0\in\mathbb{R}^{d_{\mathrm{out}}\times d_{\mathrm{in}}}$, the LoRA parameterization is
\begin{equation}
    W = W_0 + B A,
    \label{eq:lora}
\end{equation}
where $A\in\mathbb{R}^{r\times d_{\mathrm{in}}}$ and $B\in\mathbb{R}^{d_{\mathrm{out}}\times r}$ are trainable low-rank factors with $r\ll\min(d_{\mathrm{in}},d_{\mathrm{out}})$. The base weight $W_0$ is shared across both auxiliary models, while the trainable parameters $w=\{A_m,B_m\}_m$ distinguish them. Let $w_0$ denote the initial trainable parameters inherited from the base model. The reference model $\Mref$ keeps $w_0$ unchanged, whereas the probe model $\Mpro$ updates $w$ through a controlled training process.

All subsequent geometric quantities, including gradients, Fisher matrices, and subspace bases, are defined in this trainable parameter subspace $w\in\mathbb{R}^{d_w}$ rather than in the full parameter space of the target LLM. In our implementation, basis extraction and projection act on the LoRA factors of the up-projection layers, which in our experiments sufficed to separate forget inputs from retain inputs.

For a supervised sample $(x,y)$, we define the answer-side negative log-likelihood as
\begin{equation}
    \ell(w, x, y)= - \sum_{t \in \mathrm{ans}} \log p_w(y_t \mid x, y_{<t}),
    \label{eq:nll}
\end{equation}
where $p_w(\cdot \mid x)$ is the output distribution of the probe model and the summation is over answer-token positions. On the forget side, we optimize
\begin{equation}
    \mathcal{L}_f(w)
    =
    \mathbb{E}_{(x,y)\sim D_f}[\ell(w, x, y)],
    \label{eq:forget_loss}
\end{equation}
which increases the behavioral separation between $\Mpro$ and $\Mref$ on the forget set. On the retain side, we align the predictive trajectory of $\Mpro$ to that of $\Mref$ through a sequence-level KL divergence. Let $z=[x,y]$ denote the full retain sequence of length $T$ and let $h_t=z_{<t}$ be its prefix at position $t$. The retain-side objective is
\begin{equation}
    \mathcal{L}_r(w)
    =
    \mathbb{E}_{(x,y)\sim D_r}
    \left[
        \frac{1}{T}\sum_{t=1}^{T}
        \mathrm{KL}\!\left(
            p_{w_0}(\cdot \mid h_t)\,\|\,p_w(\cdot \mid h_t)
        \right)
    \right],
    \label{eq:retain_loss}
\end{equation}
which acts on the full token-level predictive path rather than only on answer tokens. This term prevents the probe model from drifting away from the reference behavior on retained queries. The total probe-training objective is
\begin{equation}
    \mathcal{L}(w)=\mathcal{L}_f(w)+\beta \mathcal{L}_r(w),
    \label{eq:total_loss}
\end{equation}
where $\beta$ balances separation on the forget set against preservation on retained data.

\subsection{Activation-Informed Retain Basis as an Initial Approximation}

The objectives above specify how the probe model should move, but they do not yet determine which update directions should be protected so that behavioral divergence appears mainly on the forget set. CBD does not aim to move $\Mpro$ uniformly across all inputs. Instead, we want $\Mpro$ to separate from $\Mref$ on forget inputs while staying close to $\Mref$ on retained behavior. Inspired by Gradient Projection Memory (GPM) for continual learning~\cite{saha2021gradient}, we therefore begin with a retain-side activation basis as an initial approximation to the directions that should not be disturbed.

Consider the forward computation at layer $m$ under LoRA parameterization,
\begin{equation}
    y_m = (W_m + B_m A_m)x_{m-1},
    \label{eq:lora_layer}
\end{equation}
where $W_m$ is the frozen base weight, $A_m$ and $B_m$ are the trainable LoRA factors, and $x_{m-1}$ is the input activation to layer $m$. To make the dependence on the activation explicit, we use a squared-error surrogate at layer $m$,
\begin{equation}
    L_m
    =
    \frac{1}{2}\|y_m-y_m^\star\|_2^2
    =
    \frac{1}{2}\|(W_m+B_mA_m)x_{m-1}-y_m^\star\|_2^2,
    \label{eq:lora_surrogate}
\end{equation}
where $y_m^\star$ denotes the target output. The same dependence on $x_{m-1}$ also holds for the standard cross-entropy loss used in language modeling. Let $\delta_m=(W_m+B_mA_m)x_{m-1}-y_m^\star$ denote the output error. Then the LoRA gradients are
\begin{equation}
    \frac{\partial L_m}{\partial A_m}
    =
    B_m^{\top} \delta_m x_{m-1}^{\top},
    \qquad
    \frac{\partial L_m}{\partial B_m}
    =
    \delta_m (A_m x_{m-1})^{\top}.
    \label{eq:lora_grads}
\end{equation}
These expressions show that the update directions are determined by the input activation $x_{m-1}$ and its linear transforms. Dominant activation directions that repeatedly appear across retain samples therefore form an initial approximation to the directions along which retained behavior is most sensitive.

Using retain samples, we collect the layer-$m$ activation representations
\begin{equation}
    R_m
    =
    [x_{m-1,1},x_{m-1,2},\dots,x_{m-1,n_r}]
    \in\mathbb{R}^{d_{m-1}\times n_r},
    \label{eq:retain_repr}
\end{equation}
where $x_{m-1,i}$ is the input activation of the $i$-th retain sample at layer $m$, $n_r$ is the number of collected retain activations, and $d_{m-1}$ is the activation dimension. We compute the singular value decomposition
\begin{equation}
    R_m = U_m \Sigma_m V_m^{\top},
    \label{eq:repr_svd}
\end{equation}
and choose the smallest rank $k_m$ such that
\begin{equation}
    \|(R_m)_{k_m}\|_F^2 \ge \epsilon_m \|R_m\|_F^2,
    \label{eq:svd_energy}
\end{equation}
where $(R_m)_{k_m}$ denotes the rank-$k_m$ approximation and $\epsilon_m$ is the retained-energy threshold. The leading left singular vectors form the retain-side activation basis
\begin{equation}
    U_{r,m}=[u_{m,1},u_{m,2},\dots,u_{m,k_m}],
    \label{eq:retain_basis_matrix}
\end{equation}
\begin{equation}
    \mathcal{S}_{r,m}=\mathrm{span}\{u_{m,1},u_{m,2},\dots,u_{m,k_m}\}.
    \label{eq:retain_basis}
\end{equation}
Each $u_{m,i}$ is a direction in the feature space, so it describes a co-activation pattern that repeatedly appears across retain samples, whereas the right singular vectors in $V_m$ describe how individual samples combine along those patterns. Since the purpose of projection is to preserve stable retain-side activation directions rather than sample-specific identities, the left singular vectors provide the appropriate basis.

Since the rows of $\partial L_m/\partial A_m$ in~\eqref{eq:lora_grads} lie in the layer-$m$ activation space, each layer-wise basis projects the gradient by right multiplication with $I-U_{r,m}U_{r,m}^{\top}$. Stacking the layer-wise retain projectors $U_{r,m}U_{r,m}^{\top}$ over the vectorized trainable parameters yields a block-diagonal orthogonal projector $U_rU_r^{\top}$ on $\mathbb{R}^{d_w}$, where $U_r$ is the global retain basis. The retain-only orthogonal projection then removes forget-side motion along retain-sensitive directions,
\begin{equation}
    \bar{g}_f^{(r)} = g_f - U_r U_r^{\top} g_f,
    \label{eq:retain_orth_proj}
\end{equation}
where $g_f=\nabla_w\mathcal{L}_f(w)$ is the forget-side gradient. This update keeps the forget-induced change away from directions that encode retained behavior. Writing $g_r=\nabla_w \mathcal{L}_r(w)$ for the retain-side gradient, the corresponding base training direction is
\begin{equation}
    g^{(r)} = \bar{g}_f^{(r)} + \beta g_r,
    \label{eq:combined_grad_retain}
\end{equation}
and the probe parameters are updated by
\begin{equation}
    w \leftarrow w - \eta g^{(r)}.
    \label{eq:param_update_retain}
\end{equation}
When forget-retain coupling is weak, as on the smallest split of the Task of Fictitious Unlearning (ToFU) benchmark, the projected update already yields useful separation between forget-like and retained inputs. On that split, this base variant still attains a routing area under the receiver operating characteristic (ROC) curve (AUC) of 0.9559 (Section~\ref{sec:exp}). The forget-side term avoids retain-sensitive directions, whereas the retain-side term keeps the probe model aligned with the reference behavior.

\subsection{Behavioral Divergence Score and Routing Rule}

After probe training, we convert the behavioral divergence between $\Mref$ and $\Mpro$ into an unlearning-relevance score $s(x)$, where a larger value indicates stronger relevance to the forget set. The divergence is quantified using symmetric KL divergence at the token level. For an input $x$, let $p_{\mathrm{ref}}^{(t)}(\cdot\mid x)$ and $p_{\mathrm{pro}}^{(t)}(\cdot\mid x)$ denote the token distributions of $\Mref$ and $\Mpro$ at position $t$. The local divergence is
\begin{equation}
    d_t(x)
    =
    \frac{1}{2}
    \left[
        \mathrm{KL}(p_{\mathrm{ref}}^{(t)}\|p_{\mathrm{pro}}^{(t)})
        +
        \mathrm{KL}(p_{\mathrm{pro}}^{(t)}\|p_{\mathrm{ref}}^{(t)})
    \right].
    \label{eq:sym_kl}
\end{equation}
The symmetric form makes the score independent of the order of the two auxiliary models. The score is obtained by aggregating the local divergences over a set of aligned positions $\mathcal{T}(x)$.
\begin{equation}
    s(x)
    =
    \frac{1}{|\mathcal{T}(x)|}
    \sum_{t\in\mathcal{T}(x)} d_t(x).
    \label{eq:score}
\end{equation}
The choice of $\mathcal{T}(x)$ depends on the task structure. For generative tasks, the score is computed on a fixed decoding path shared by the two auxiliary models. For multiple-choice tasks, it is computed on prompt-side positions near the answer-option token.

Given validation forget and retain sets $D_f^{\mathrm{val}}$ and $D_r^{\mathrm{val}}$, we calibrate the routing threshold by supervised search.
\begin{equation}
    \tau^{\star}
    =
    \arg\max_{\tau\in\Gamma}
    \mathcal{J}(\tau,D_f^{\mathrm{val}},D_r^{\mathrm{val}}),
    \label{eq:threshold_search}
\end{equation}
where $\Gamma$ is the candidate threshold set and $\mathcal{J}$ is a calibration objective defined on the validation samples. During deployment, CBD routes each query according to~\eqref{eq:routing}. Inputs with $s(x)>\tau$ are answered by $\Mref$, while inputs with $s(x)\le\tau$ proceed to the target LLM $M$. The complete training and deployment workflow is summarized in Algorithm~\ref{alg:cbd} in Section~\ref{sec:subspace}, whose base variant is obtained by extracting $U_r$ and using the complement projection~\eqref{eq:retain_orth_proj} in the training loop.

\section{Discriminative Fisher Basis Extraction}
\label{sec:subspace}

Section~\ref{sec:method} introduced the retain basis $U_r$ as an initial approximation to the directions that should be protected so that the probe model stays close to the reference model on retained behavior. However, when forget and retain samples are highly similar, the retain basis can overlap with directions needed for forget-side separation, so a retain-only projection may become too conservative.

\begin{table}
\centering
\caption{Illustrative forget and retain query pairs with high structural similarity on ToFU. Blue shading marks shared wording, and orange shading marks differing entities or attributes.}
\label{tab:high_similarity_examples}
\setlength{\tabcolsep}{3pt}
\renewcommand{\arraystretch}{1.1}
\footnotesize
\begin{tabular}{@{}>{\centering\arraybackslash}m{0.08\columnwidth}>{\centering\arraybackslash}m{0.15\columnwidth}>{\raggedright\arraybackslash}m{0.71\columnwidth}@{}}
\toprule
\multicolumn{1}{c}{Pair} & \multicolumn{1}{c}{Split} & \multicolumn{1}{c}{Query} \\
\midrule
\multirow[c]{2}{*}[-1.6ex]{1} & Forget & \shared{What is the full name} \shared{of the LGBTQ+ author} \shared{born in Tehran, Iran} on \diffpart{11/26/1972}? \\
 & Retain & \shared{What is the full name} \shared{of the LGBTQ+ author} who was \shared{born in Tehran, Iran} on \diffpart{September 22, 1976}? \\
\midrule
\multirow[c]{2}{*}[-1.6ex]{2} & Forget & \shared{Has any of} \diffpart{Adib Jarrah's} \shared{works been adapted into} \shared{films or series}? \\
 & Retain & \shared{Have any of} \diffpart{Femi Oluwatoyin's} \shared{works been adapted into} \shared{films or series}? \\
\midrule
\multirow[c]{2}{*}[-1.6ex]{3} & Forget & \shared{How has} \diffpart{Hsiao Yun-Hwa's} \shared{identity as a member} \shared{of the LGBTQ+ community} \shared{influenced her} \diffpart{work}? \\
 & Retain & \shared{How has} \diffpart{Laila Amira al-Faisal's} \shared{identity as a member} \shared{of the LGBTQ+ community} \shared{influenced her} \diffpart{writing}? \\
\bottomrule
\end{tabular}
\end{table}

\begin{figure}
    \centering
    \includegraphics[width=.84\columnwidth]{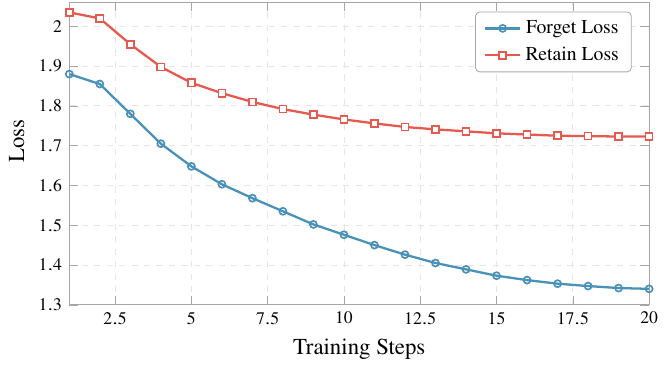}
    \caption{Coupled forget-side and retain-side loss trajectories during forget-side optimization. The co-movement indicates that the two subsets share update directions rather than occupying cleanly separable subspaces.}
    \label{fig:retain_forget_coupling}
\end{figure}

As Table~\ref{tab:high_similarity_examples} shows, forget and retain queries on ToFU often share almost the same question template and differ mainly in the author identity or one narrow attribute. This high overlap means that forget and retain samples can activate highly similar directions. Figure~\ref{fig:retain_forget_coupling} shows the resulting behavior. When the gradient-projection training rule is constructed from retain activation representations, the retain-side loss of $\Mpro$ still decreases together with the forget-side loss rather than staying nearly constant, which indicates that the two subsets are coupled through overlapping directions in the trainable subspace. Under this regime, a basis extracted only from retain activations tends to overlap with directions that also matter for forget-side separation. Projecting away all retain-associated directions would then suppress not only retain drift, but also part of the useful signal that makes $\Mpro$ deviate from $\Mref$ on forget inputs. Below we construct a basis that favors directions improving forget-side separation while keeping retain-side cost small.

\subsection{Discriminative Fisher Criterion}

We seek update directions that change the probe model more on forget inputs than on retain inputs. To connect this criterion with a computable local geometry, consider a small displacement $\Delta w$ around the reference point $w_0$ in the trainable parameter subspace. For a fixed input $x$, define $q(\cdot\mid x)=p_{w_0}(\cdot\mid x)$ and $p(\cdot\mid x)=p_{w_0+\Delta w}(\cdot\mid x)$. The local output change is measured by $\mathrm{KL}(q\|p)=\mathbb{E}_{y\sim q}[\log q(y\mid x)-\log p(y\mid x)]$. Applying a second-order Taylor expansion to $\log p_{w_0+\Delta w}(y\mid x)$ around $w_0$ gives
\[
\begin{aligned}
\log p_{w_0+\Delta w}(y\mid x)
&\approx
\log p_{w_0}(y\mid x) \\
&\quad
+g_x(y)^\top\Delta w
+\frac{1}{2}\Delta w^\top H_x(y)\Delta w,
\end{aligned}
\]
where $g_x(y)=\nabla_w\log p_w(y\mid x)\big|_{w_0}$ and $H_x(y)=\nabla_w^2\log p_w(y\mid x)\big|_{w_0}$ denote the per-sample gradient and Hessian of the log-likelihood. Substituting the expansion into the KL divergence gives
\[
\mathrm{KL}(q\|p)
\approx
-\mathbb{E}_{q}[g_x(y)]^\top\Delta w
-\frac{1}{2}\Delta w^\top \mathbb{E}_{q}[H_x(y)]\Delta w.
\]
The first-order term is zero because
\[
\mathbb{E}_{q}[g_x(y)]
=
\sum_y \nabla_w p_w(y\mid x)\big|_{w_0}
=
\nabla_w \sum_y p_w(y\mid x)\big|_{w_0}
=0.
\]
Under standard regularity conditions, the Fisher information matrix satisfies
\[
F(x)=\mathbb{E}_{q}[g_x(y)g_x(y)^\top]
=-\mathbb{E}_{q}[H_x(y)].
\]
Therefore,
\begin{equation}
    \mathrm{KL}\!\left(
        p_{w_0}(\cdot \mid x)\,\|\,p_{w_0+\Delta w}(\cdot \mid x)
    \right)
    \approx
    \frac{1}{2}\Delta w^{\top}F(x)\Delta w,
    \label{eq:local_kl}
\end{equation}
where $F(x)$ is the Fisher information matrix induced by input $x$ in the trainable parameter subspace~\cite{martens2020insights}. Averaging over the forget and retain sets yields
\begin{equation}
    F_f = \mathbb{E}_{x\sim D_f}[F(x)],
    \qquad
    F_r = \mathbb{E}_{x\sim D_r}[F(x)].
    \label{eq:fisher_avg}
\end{equation}

The desired update directions should produce large forget-side change while limiting retain-side drift. This requirement can be written as the constrained optimization problem
\begin{equation}
    \max_{\Delta w\neq 0}\ \Delta w^{\top}F_f\Delta w
    \quad
    \text{subject to}
    \quad
    \Delta w^{\top}F_r\Delta w\le \rho,
    \label{eq:rayleigh}
\end{equation}
where $\rho$ is the allowed retain-side budget. Since the retain Fisher matrix is estimated from finitely many samples and is therefore rank-deficient or ill-conditioned in practice, we damp the retain-side term and introduce
\begin{equation}
    C=F_r+\mu I, \qquad \mu>0,
    \label{eq:cmatrix}
\end{equation}
which keeps the constraint matrix positive definite. With this regularization, the direction-selection problem becomes
\begin{equation}
    \max_{\Delta w\neq 0}\ \Delta w^{\top}F_f\Delta w
    \quad
    \text{subject to}
    \quad
    \Delta w^{\top}C\Delta w\le \rho.
    \label{eq:rayleigh_reg}
\end{equation}

The regularized problem in~\eqref{eq:rayleigh_reg} can be read through its Lagrangian form
\begin{equation}
    \mathcal{L}_{\mathrm{lag}}(\Delta w,\lambda)
    =
    \Delta w^{\top}F_f\Delta w
    -
    \lambda\bigl(\Delta w^{\top}C\Delta w-\rho\bigr).
    \label{eq:lagrangian}
\end{equation}
Setting the derivative with respect to $\Delta w$ to zero gives
\begin{equation}
    F_f\Delta w = \lambda C\Delta w,
    \label{eq:gevp_raw}
\end{equation}
so every stationary direction is a generalized eigenvector of the matrix pair $(F_f,C)$. This derivation also clarifies the role of $\rho$. It controls the admissible retain-side displacement scale rather than the preferred direction itself. Since the basis construction only needs the generalized eigen-directions, Algorithm~\ref{alg:basis} does not treat $\rho$ as an independent hyperparameter. The eventual update magnitude is controlled later by the learning rate $\eta$ and the retain-side weight $\beta$ in Section~\ref{sec:method}.

Because the subspace used in training has dimension $k$ rather than one, we consider the natural multi-direction extension of~\eqref{eq:rayleigh_reg}, in which $k$ directions are selected jointly under a retain-side normalization,
\begin{equation}
    \max_{V\in\mathbb{R}^{d_w\times k},\ V^{\top}CV=I_k}\ \mathrm{tr}\!\left(V^{\top}F_fV\right).
    \label{eq:rayleigh_trace}
\end{equation}

\begin{theorem}\label{thm:gevp}
The objective in~\eqref{eq:rayleigh_trace} is maximized by the generalized eigenvectors $v_1,\dots,v_k$ associated with the $k$ largest generalized eigenvalues of the matrix pair $(F_f,C)$, i.e.,
\begin{equation}
    F_f v_i = \lambda_i Cv_i = \lambda_i(F_r+\mu I)v_i.
    \label{eq:gevp}
\end{equation}
\end{theorem}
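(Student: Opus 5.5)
The plan is to reduce the constrained trace maximization in~\eqref{eq:rayleigh_trace} to an ordinary symmetric eigenvalue problem by whitening with respect to $C$, and then apply Ky Fan's maximum principle. Since $F_r$ is an average of outer products $g_x g_x^\top$, it is symmetric positive semidefinite. With $\mu>0$, the matrix $C=F_r+\mu I$ is symmetric positive definite, so it has a unique symmetric positive definite square root $C^{1/2}$ and an inverse $C^{-1/2}$. Likewise $F_f$ is symmetric positive semidefinite.

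\textbf{Step 1: change of variables.} Substitute $U=C^{1/2}V$. The constraint $V^\top C V=I_k$ becomes $U^\top U=I_k$. The objective becomes $\mathrm{tr}(V^\top F_f V)=\mathrm{tr}(U^\top \widetilde F U)$ with $\widetilde F=C^{-1/2}F_fC^{-1/2}$, which is symmetric. The map $V\mapsto C^{1/2}V$ is a bijection between the two feasible sets, so the two problems have the same optimal value and their maximizers correspond one to one.

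\textbf{Step 2: Ky Fan.} Order the eigenvalues of $\widetilde F$ as $\lambda_1\ge\dots\ge\lambda_{d_w}$, with orthonormal eigenvectors $u_i$. Ky Fan's theorem gives $\max_{U^\top U=I_k}\mathrm{tr}(U^\top\widetilde F U)=\sum_{i=1}^k\lambda_i$, attained at $U=[u_1,\dots,u_k]$.

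For a self-contained argument, write $\mathrm{tr}(U^\top\widetilde FU)=\sum_j \lambda_j\, p_j$, where $p_j=\|U^\top u_j\|^2$. These weights satisfy $0\le p_j\le 1$ and $\sum_j p_j=k$. A linear function of the $p_j$ over this polytope is maximized by putting weight $1$ on the $k$ largest $\lambda_j$. This bound is attained by the choice of $U$ above.

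\textbf{Step 3: translate back.} Set $v_i=C^{-1/2}u_i$. Applying $C^{1/2}$ to both sides shows that $\widetilde F u_i=\lambda_i u_i$ is equivalent to $F_f v_i=\lambda_i C v_i$, which is~\eqref{eq:gevp}. The generalized eigenvalues of the pair $(F_f,C)$ therefore coincide with the eigenvalues of $\widetilde F$, and they are real and nonnegative. The vectors $v_i$ automatically satisfy $V^\top CV=I_k$.

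It also helps to note why the optimum is insensitive to the basis chosen within the subspace. Both the objective and the constraint are invariant under $V\mapsto VR$ for orthogonal $R$. So the claim is really about the span of the top-$k$ generalized eigenvectors. When $\lambda_k=\lambda_{k+1}$ the maximizer is not unique, and the statement should be read as "a maximizer is given by."

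\textbf{Main obstacle.} The substantive step is Step 2, the trace (Ky Fan) inequality. The rest is bookkeeping that relies on $C\succ0$, and that is exactly what the damping $\mu>0$ guarantees. If citing Ky Fan directly is acceptable, the proof is short. Otherwise the doubly substochastic weight argument above is what I would write out.
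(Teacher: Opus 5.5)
Your proposal is correct and follows the paper's proof essentially step for step. Like the paper, you whiten by $C^{1/2}$ (legitimate because $\mu>0$ gives $C\succ 0$), apply Ky Fan to $C^{-1/2}F_fC^{-1/2}$, and map the top-$k$ eigenvectors back via $v_i=C^{-1/2}u_i$ to obtain $F_fv_i=\lambda_iCv_i$; your self-contained weight argument for Ky Fan and the explicit $C$-orthonormal normalization of the $v_i$ are details the paper leaves implicit, while the paper adds a remark, beyond the statement itself, that for $k=1$ the same reasoning solves the single-direction problem with constraint $\Delta w^{\top}C\Delta w\le\rho$.
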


\begin{proof}
Because $C\succ 0$, the substitution $\zeta_i=C^{1/2}v_i$ is invertible, and the constraint $V^{\top}CV=I_k$ becomes the requirement that $\Xi=[\zeta_1,\dots,\zeta_k]$ has orthonormal columns. The objective becomes
\[
\mathrm{tr}\!\left(\Xi^{\top}C^{-1/2}F_fC^{-1/2}\Xi\right).
\]
The matrix $C^{-1/2}F_fC^{-1/2}$ is symmetric positive semidefinite, and by the Ky Fan characterization the maximum of this trace over orthonormal $\Xi$ is attained by its eigenvectors associated with the $k$ largest eigenvalues. Writing $\zeta_i=C^{1/2}v_i$ gives
\[
C^{-1/2}F_fC^{-1/2}(C^{1/2}v_i)=\lambda_i(C^{1/2}v_i),
\]
which is equivalent to $F_fv_i=\lambda_i Cv_i$. For $k=1$ the same argument also solves~\eqref{eq:rayleigh_reg}, since the objective is homogeneous of degree two and $F_f$ is positive semidefinite with $F_f\neq 0$. The optimal value is then positive, the constraint is active at any maximizer, and the optimal direction coincides with the leading generalized eigenvector.
\end{proof}

Theorem~\ref{thm:gevp} shows that the leading generalized eigenvectors maximize forget-side change per unit retain-side cost. We stack the top-$k$ generalized eigenvectors as
\begin{equation}
    V_k=[v_1,\dots,v_k]
    \label{eq:basis_matrix}
\end{equation}
and define the discriminative basis $Q$ as the orthonormal factor of the QR factorization of $V_k$, so that $QQ^{\top}$ is an orthogonal projector onto $\mathrm{span}\{v_1,\dots,v_k\}$. This $Q$ is the basis used by the high-similarity variant of CBD. Its role is different from that of the retain basis $U_r$ in Section~\ref{sec:method}. The retain basis $U_r$ is an avoidance basis that marks directions which should not be disturbed because they are sensitive to retained behavior. The discriminative basis $Q$ is a selection basis that keeps directions which most improve forget-retain separation under controlled retain-side cost.

\subsection{Empirical Fisher Estimation and Efficient Computation}

The criterion above defines the optimization target of the final basis $Q$. We now realize it from sample gradients rather than explicit Hessian computation. For each supervised sample $(x_i,y_i)$ in a dataset $\mathcal{D}=\{(x_i,y_i)\}_{i=1}^{N}$, we evaluate the answer-side negative log-likelihood~\eqref{eq:nll} under teacher forcing at the common reference point $w_0$, and only the answer tokens contribute to this loss. For multiple-choice data, only the final answer-option token is supervised, which avoids tokenizer-dependent instability. The resulting sample gradient in the trainable auxiliary parameter space is
\begin{equation}
    g_i=\nabla_w \ell(w, x_i, y_i)\big|_{w_0}\in\mathbb{R}^{d_w}.
\end{equation}
The corresponding empirical Fisher matrix is estimated by the average outer product of these sample gradients,
\begin{equation}
    \widehat{F}=\frac{1}{N}\sum_{i=1}^{N} g_i g_i^{\top},
    \label{eq:emp_fisher_avg}
\end{equation}
which replaces the model expectation in the definition of $F(x)$ by the observed labels and is the standard empirical-Fisher surrogate~\cite{martens2020insights}. Stacking the gradients of the $N_f$ forget and $N_r$ retain samples used for basis extraction into matrices
\begin{equation}
    G_f=[g_{f,1},\dots,g_{f,N_f}],
    \qquad
    G_r=[g_{r,1},\dots,g_{r,N_r}],
\end{equation}
we can write the empirical Fisher matrices compactly as
\begin{equation}
    \widehat{F}_f=\frac{1}{N_f}G_fG_f^{\top},
    \qquad
    \widehat{F}_r=\frac{1}{N_r}G_rG_r^{\top}.
    \label{eq:emp_fisher}
\end{equation}
For basis extraction only, we use the same answer-token loss on both subsets so that forget and retain directions are measured in a common local geometry, while the probe-training objective in Section~\ref{sec:method} still keeps the retain-side KL term for behavioral alignment during optimization. The estimator is appropriate here because basis extraction is performed in the same low-rank auxiliary parameter space used by the probe model, and the forget and retain samples are both evaluated around the same reference point $w_0$.

Solving the generalized eigenvalue problem~\eqref{eq:gevp} directly in $\mathbb{R}^{d_w}$ is computationally expensive when $d_w$ is large, so we convert the problem from parameter scale to sample scale. For implementation, write
\begin{equation}
    \widehat{C}
    =
    \widehat{F}_r+\mu I
    =
    \mu I+\frac{1}{N_r}G_rG_r^{\top},
    \label{eq:chat_matrix}
\end{equation}
where the damping factor $\mu$ is set in proportion to the scale of $\widehat{F}_r$ by default. The empirical generalized eigenvalue problem is then
\begin{equation}
    \widehat{F}_f v=\lambda \widehat{C}v.
    \label{eq:empirical_gevp}
\end{equation}
Because $\widehat{F}_f=\frac{1}{N_f}G_fG_f^\top$, the forget-side term acts only through the span of the forget gradient matrix $G_f$, which allows the problem to be reduced to sample scale. Define
\begin{equation}
    Z=\widehat{C}^{-1}G_f\in\mathbb{R}^{d_w\times N_f},
\end{equation}
\begin{equation}
    \Phi=\frac{1}{N_f}G_f^{\top}Z=\frac{1}{N_f}G_f^{\top}\widehat{C}^{-1}G_f
    \in\mathbb{R}^{N_f\times N_f},
    \label{eq:small_matrix}
\end{equation}
\begin{equation}
    \Phi u=\lambda u.
    \label{eq:small_eig}
\end{equation}
If~\eqref{eq:small_eig} holds, then the corresponding parameter-space direction is recovered by
\begin{equation}
    v=Zu=\widehat{C}^{-1}G_fu.
    \label{eq:recover_v}
\end{equation}
Substituting~\eqref{eq:recover_v} into~\eqref{eq:empirical_gevp} recovers the same generalized eigenvalue relation. Conversely, since $\mathrm{rank}(\widehat{F}_f)\le N_f$, every generalized eigenvector with nonzero eigenvalue lies in the column span of $\widehat{C}^{-1}G_f$, so the large problem in $\mathbb{R}^{d_w}$ and the small problem in $\mathbb{R}^{N_f}$ share the same nonzero spectrum.

It remains to compute $Z=\widehat{C}^{-1}G_f$ without explicitly forming $\widehat{C}^{-1}$. Applying the Woodbury identity to~\eqref{eq:chat_matrix} gives
\begin{equation}
    \widehat{C}^{-1}
    =
    \frac{1}{\mu}I
    -
    \frac{1}{\mu^2 N_r}
    G_r
    S^{-1}
    G_r^{\top},
    \label{eq:woodbury}
\end{equation}
where only the retain-side sample matrix
\begin{equation}
    S=I+\frac{1}{\mu N_r}G_r^{\top}G_r
    \in\mathbb{R}^{N_r\times N_r}
    \label{eq:woodbury_small}
\end{equation}
has to be inverted. To make the computation explicit, define the cross matrix
\begin{equation}
    P=G_r^{\top}G_f
    \in\mathbb{R}^{N_r\times N_f}
    \label{eq:woodbury_cross}
\end{equation}
and solve the small linear system
\begin{equation}
    S\Psi=P,
    \qquad
    \Psi\in\mathbb{R}^{N_r\times N_f},
    \label{eq:woodbury_linear}
\end{equation}
typically by first computing the Cholesky factorization $S=LL^\top$ and then applying forward and backward substitution. Substituting the solution of~\eqref{eq:woodbury_linear} into~\eqref{eq:woodbury} yields
\begin{equation}
    Z
    =
    \widehat{C}^{-1}G_f
    =
    \frac{1}{\mu}G_f
    -
    \frac{1}{\mu^2 N_r}
    G_r\Psi.
    \label{eq:z_formula}
\end{equation}
Once $Z$ is available, the sample-scale matrix $\Phi$ follows directly from~\eqref{eq:small_matrix}. We then compute the top-$k$ eigenpairs of $\Phi$, recover the parameter-space directions through~\eqref{eq:recover_v}, and orthonormalize them into the final basis $Q$ as defined after Theorem~\ref{thm:gevp}.

This reformulation avoids dense inversion in the trainable parameter space. Its dominant cost comes from sample-gradient computation, the formation of the retain-side sample matrix $S$ and the solution of the linear system $S\Psi=P$, and the eigendecomposition of the sample-scale matrix $\Phi$. Since $N_f,N_r\ll d_w$ in typical settings, the cost scales with the sample counts rather than with $d_w$, avoiding the formation of full $d_w\times d_w$ Fisher matrices. Algorithm~\ref{alg:basis} gives the complete basis-extraction procedure used in CBD.

\begin{algorithm}
\caption{Discriminative Fisher Basis Extraction in CBD}
\label{alg:basis}
\begin{algorithmic}[1]
\Require Reference model $\Mref$, forget set $D_f$, retain set $D_r$, damping factor $\mu$, basis dimension $k$
\Ensure Basis matrix $Q$
\State Compute sample gradients at $w_0$ under teacher forcing on answer tokens and form $G_f,G_r$
\State Form $S=I+\frac{1}{\mu N_r}G_r^\top G_r$ and $P=G_r^\top G_f$
\State Compute the Cholesky factorization $S=LL^\top$
\State Solve $S\Psi=P$ by forward and backward substitution
\State Compute $Z=\frac{1}{\mu}G_f-\frac{1}{\mu^2 N_r}G_r\Psi$
\State Form $\Phi=\frac{1}{N_f}G_f^\top Z$ and compute its top-$k$ eigenpairs $(u_i,\lambda_i)$
\State Recover $v_i=Zu_i$ for $i=1,\dots,k$
\State Stack $V_k=[v_1,\dots,v_k]$ and set $Q$ to the orthonormal factor of its QR factorization
\State \Return $Q$
\end{algorithmic}
\end{algorithm}

Once the discriminative basis $Q$ has been extracted, the high-similarity variant of CBD replaces the retain-only complement projection~\eqref{eq:retain_orth_proj} with a projection onto the selected discriminative subspace. The forget-side projected gradient becomes
\begin{equation}
    \bar{g}_f = Q Q^{\top} g_f,
    \label{eq:proj_forget}
\end{equation}
where $QQ^{\top}$ is the projection matrix onto the final discriminative basis. The corresponding training direction is
\begin{equation}
    g = \bar{g}_f + \beta g_r,
    \label{eq:combined_grad}
\end{equation}
and the probe model is updated by
\begin{equation}
    w \leftarrow w - \eta g.
    \label{eq:param_update}
\end{equation}
This update no longer treats all retain-associated directions as forbidden. Instead, it keeps the directions that maximize forget-side change under controlled retain-side cost. Algorithm~\ref{alg:cbd} summarizes the complete CBD workflow.

\begin{algorithm}
\caption{CBD Training and Deployment}
\label{alg:cbd}
\begin{algorithmic}[1]
\Require Target model $M$, reference model $\Mref$, forget set $D_f$, retain set $D_r$, validation sets $D_f^{\mathrm{val}}, D_r^{\mathrm{val}}$, threshold candidates $\Gamma$, retain-side weight $\beta$, learning rate $\eta$, damping factor $\mu$, basis dimension $k$
\Ensure Probe model $\Mpro$, routing threshold $\tau$
\State Extract the discriminative basis $Q$ using Algorithm~\ref{alg:basis}
\State Initialize probe-model parameters with $w\leftarrow w_0$
\While{the fixed training-step budget is not reached}
    \State Sample mini-batches from $D_f$ and $D_r$
    \State Compute forget-side gradient $g_f$ and retain-side gradient $g_r$
    \State Project forget-side gradient onto the discriminative basis as $\bar{g}_f=QQ^{\top} g_f$
    \State Form the combined gradient $g=\bar{g}_f+\beta g_r$
    \State Update $w\leftarrow w-\eta g$
\EndWhile
\State Denote the resulting probe model by $\Mpro$
\State Compute validation scores using symmetric KL divergence between $\Mref$ and $\Mpro$
\State Select the routing threshold $\tau$ by supervised search
\State Deploy CBD according to~\eqref{eq:routing}
\State \Return $\Mpro$, $\tau$
\end{algorithmic}
\end{algorithm}

\section{Experiments}
\label{sec:exp}

\subsection{Experimental Setup}

All experiments were conducted on a server with four NVIDIA GeForce RTX 3090 GPUs (24 GB each), using Python 3.10 and PyTorch 2.1.1 with CUDA 11.8.

\textbf{Models and Datasets.} We evaluate CBD on ToFU~\cite{maini2024tofu}\footnote{\url{https://huggingface.co/datasets/locuslab/TOFU}} and Weapons of Mass Destruction Proxy (WMDP)~\cite{li2024wmdp}\footnote{\url{https://huggingface.co/datasets/cais/wmdp}}. ToFU contains 4,000 fictitious-author question-answer pairs, 100 real-author questions, and 117 world-fact questions. ToFU defines three unlearning settings, namely \textit{forget01}, \textit{forget05}, and \textit{forget10}, which remove 1\%, 5\%, and 10\% of the fictitious-author data, respectively, with the corresponding forget and retain split sizes listed in Table~\ref{tab:dataset-summary}. Perturbed answers are used only for the truth-ratio metric. We use the ToFU-released Llama-2-7B-Chat model\footnote{\url{https://huggingface.co/locuslab/tofu_ft_llama2-7b}} fine-tuned on the full ToFU question-answer set as the target model~\cite{maini2024tofu,touvron2023llama}.

\begin{table}
\centering
\caption{Datasets, target models, and main optimization hyperparameters. For WMDP, the listed sets are the unlearning and retained-utility evaluation sets. All LoRA-based runs use rank $32$, scaling factor $64$, and dropout $0.05$, while ToFU white-box uses full fine-tuning.}
\label{tab:dataset-summary}
\footnotesize
\renewcommand{\arraystretch}{1.1}
\setlength{\tabcolsep}{1.5pt}
\begin{tabular}{@{}lccc@{}}
\hline\hline
Dataset & Forget data & Retain data & Target model \\
\hline
ToFU\footnotemark[1] & 40/200/400 QA & 3,960/3,800/3,600 QA & Llama-2-7B-Chat\footnotemark[3] \\
WMDP\footnotemark[2] & 3,668 QA & 14,042 QA & Zephyr-7B-beta\footnotemark[4] \\
\hline\hline
\end{tabular}

\vspace{4pt}
\label{tab:hyperparams}
\setlength{\tabcolsep}{6pt}
\begin{tabular}{@{}llccc@{}}
\hline\hline
Dataset & Method group & LR & Batch & Accum. \\
\hline
\multirow{2}{*}{ToFU}
& White-box & $1\times10^{-5}$ & 4 & 8 \\
& Gray/black-box & $1.5\times10^{-4}$ & 4 & 2 \\
\hline
\multirow{2}{*}{WMDP}
& White-box & $2\times10^{-5}$ & 1 & 2 \\
& Gray/black-box & $2\times10^{-4}$ & 2 & 4 \\
\hline\hline
\end{tabular}
\end{table}

WMDP evaluates whether a model can answer hazardous-knowledge questions. Its unlearning evaluation set contains 3,668 four-choice questions, including 1,273 biology questions, 1,987 cybersecurity questions, and 408 chemistry questions. Its retained-utility evaluation uses Massive Multitask Language Understanding (MMLU)~\cite{hendrycks2021mmlu}, which contains 14,042 test questions over 57 subjects in the all-category setting. Following the standard WMDP setting, we use Zephyr-7B-beta\footnote{\url{https://huggingface.co/HuggingFaceH4/zephyr-7b-beta}} as the target model~\cite{li2024wmdp,tunstall2023zephyr}. On both benchmarks, CBD initializes the two auxiliary models from TinyLlama-1.1B-Chat-v1.0\footnote{\url{https://huggingface.co/TinyLlama/TinyLlama-1.1B-Chat-v1.0}}~\cite{zhang2024tinyllama}. For WMDP, probe training, basis extraction, and threshold calibration use 600 forget-side samples from the three hazardous domains and 1,200 retain-side samples, while the full question sets are used for final evaluation.

\textbf{Baselines.} We compare CBD with reference models and representative methods under different access assumptions. The reference models include 1) \textit{Target LLM}, the original target model before unlearning, and 2) \textit{Retrain LLM}, the model trained without the forget data. The Retrain LLM is reported only on ToFU, because WMDP does not provide a retrained reference.

The white-box baselines are the three objective families defined in Section~\ref{sec:preliminaries}, namely \textit{GA}~\cite{jang2023knowledge,yao2024machine}, \textit{DPO}~\cite{rafailov2023direct}, and \textit{NPO}~\cite{zhang2024negative}, each reported with its base objective and its retain-constrained \textit{+GD} and \textit{+KL} variants~\cite{yao2024machine}. The gray-box baselines are \textit{ULD}~\cite{ji2024reversing} and \textit{Offset}~\cite{huang2024offset}, which correct the target logits as in~\eqref{eq:uld} and~\eqref{eq:offset} and therefore require token-level logits, whereas CBD uses only final responses.

\textbf{Hyperparameters.} The lower block of Table~\ref{tab:hyperparams} reports the optimization settings used in the main comparisons, separated by dataset and method group because ToFU and WMDP use different target models, evaluation data, and training budgets.

\textbf{Evaluation Metrics.} On ToFU, we report ROUGE-L recall (RG), answer probability (Pr), and truth ratio (TR) following the official evaluation protocol~\cite{maini2024tofu}. RG measures lexical overlap between the generated answer and the reference answer, and Pr is the length-normalized probability $P_{\hat M}(a\mid q)^{1/|a|}$ that the evaluated model $\hat M$ assigns to the reference answer $a$ of question $q$. TR compares the average length-normalized probability $\bar p_p(q)$ of perturbed answers with that of the paraphrased true answer $\tilde a$ through $r(q)=\bar p_p(q)\,/\,P_{\hat M}(\tilde a\mid q)^{1/|\tilde a|}$ and is reported with the official scaling
\begin{equation}
    \mathrm{TR}(q)=
    \begin{cases}
    \min\{r(q), r(q)^{-1}\}, & q\in D_f,\\
    \max\{0, 1-r(q)\}, & q\in D_r.
    \end{cases}
    \label{eq:eval_tr}
\end{equation}
On the forget subset, lower RG and Pr indicate less answer recovery, while higher TR means that the model no longer strongly prefers the paraphrased true answer over perturbed alternatives. On the retained data, real-author questions, and world-fact questions, higher RG, Pr, and TR indicate better retained utility. We summarize retained utility by Model Utility (MU), the harmonic mean of RG, Pr, and TR over these retained evaluations. Since an unlearning method should suppress forget data without damaging retained utility, we also report the Forget-Retain Trade-off (FRT) metric~\cite{wang2025erasing}. FRT normalizes retained utility by the remaining answer overlap and answer probability on the forget subset,
\begin{equation}
    \mathrm{FRT}=\frac{\mathrm{MU}}{(\mathrm{RG}_f+\mathrm{Pr}_f)/2}.
    \label{eq:eval_frt}
\end{equation}
Higher FRT indicates that the model preserves more retained utility for the same level of residual answer recovery on the forget subset. On WMDP, lower hazardous-knowledge accuracy indicates stronger unlearning, while higher MMLU accuracy indicates better general capability. Because CBD makes a query-level routing decision before invoking the target model, we also report routing accuracy, true-positive rate, and false-positive rate in percent, together with the AUC, where higher routing accuracy, true-positive rate, and AUC and lower false-positive rate are preferred. During evaluation, token probabilities are used only to score the local open-source models on these benchmarks, while CBD itself requires only final text responses at deployment.

\subsection{Performance Evaluation}

The evaluation has four parts. The first part compares CBD with white-box and gray-box baselines on ToFU and WMDP. The second part varies key hyperparameters to check whether the results remain stable beyond one selected setting. The third part examines the stability of each method across training steps. The fourth part compares CBD using the discriminative Fisher basis (DFB), denoted as CBD (with DFB), with CBD using GPM, denoted as CBD (with GPM), to isolate the effect of basis construction. Unless stated otherwise, CBD refers to CBD (with DFB).

\begin{table*}
\centering
\caption{Experimental results on the ToFU dataset for unlearning 1\% and 5\% of data. For compact presentation, each setting reports forget-side RG/Pr/TR and summary MU/FRT. Within each method group, the best value in each column is boldfaced.}
\setlength{\tabcolsep}{8pt}
\renewcommand{\arraystretch}{1.1}
\footnotesize
\begin{tabular}{@{}l|ccccc|ccccc@{}}
\hline\hline
\multirow{2}{*}{Method} & \multicolumn{5}{c|}{ToFU \textit{forget01}} & \multicolumn{5}{c}{ToFU \textit{forget05}} \\
\cline{2-6}\cline{7-11}
& RG$\downarrow$ & Pr$\downarrow$ & TR$\uparrow$ & MU$\uparrow$ & FRT$\uparrow$ & RG$\downarrow$ & Pr$\downarrow$ & TR$\uparrow$ & MU$\uparrow$ & FRT$\uparrow$ \\
\hline
\multicolumn{11}{c}{\rule{0pt}{2.4ex}\textit{Standard}} \\
\hline
\textbf{Target LLM} & 89.16 & 76.37 & 54.54 & 75.37 & 0.91 & 89.53 & 79.20 & 51.97 & 75.37 & 0.89 \\
\textbf{Retrain LLM} & 40.76 & 33.79 & 69.42 & 73.86 & 1.98 & 39.49 & 34.39 & 69.30 & 74.00 & 2.00 \\
\hline
\multicolumn{11}{c}{\rule{0pt}{2.4ex}\textit{White-box}} \\
\hline
\textbf{GA} & 42.86 & 39.32 & 54.01 & 64.45 & 1.57 & 31.96 & 52.84 & 53.76 & 44.96 & 1.06 \\
\textbf{GA+GD} & 26.01 & \groupbest{8.10} & 53.78 & 57.35 & \groupbest{3.36} & 37.00 & 49.62 & 51.54 & 51.23 & 1.18 \\
\textbf{GA+KL} & 37.17 & 26.86 & 53.40 & 63.93 & 2.00 & 41.14 & 61.96 & 49.71 & 52.79 & 1.02 \\
\textbf{DPO} & 29.46 & 63.14 & 59.33 & 67.06 & 1.45 & \groupbest{19.43} & 65.11 & 56.43 & 60.57 & 1.43 \\
\textbf{DPO+GD} & \groupbest{25.83} & 62.49 & 60.10 & 67.26 & 1.52 & 30.07 & 67.02 & 55.84 & 65.66 & 1.35 \\
\textbf{DPO+KL} & 27.78 & 63.55 & 59.26 & \groupbest{67.31} & 1.47 & 37.15 & 66.76 & 55.86 & \groupbest{66.34} & 1.28 \\
\textbf{NPO} & 34.25 & 34.46 & 59.91 & 55.95 & 1.63 & 35.55 & 65.53 & 53.44 & 49.28 & 0.98 \\
\textbf{NPO+GD} & 33.32 & 38.12 & 59.68 & 60.92 & 1.71 & 30.68 & 50.14 & 61.61 & 61.00 & 1.51 \\
\textbf{NPO+KL} & 37.66 & 44.67 & \groupbest{60.28} & 65.17 & 1.58 & 26.38 & \groupbest{45.40} & \groupbest{63.75} & 56.44 & \groupbest{1.57} \\
\hline
\multicolumn{11}{c}{\rule{0pt}{2.4ex}\textit{Gray-box}} \\
\hline
\textbf{ULD} & \groupbest{12.75} & \groupbest{21.93} & \groupbest{64.56} & \groupbest{76.08} & \groupbest{4.39} & \groupbest{17.88} & 72.93 & \groupbest{56.45} & \groupbest{64.44} & \groupbest{1.42} \\
\textbf{Offset} & 20.28 & 39.49 & 11.45 & 37.89 & 1.27 & 25.91 & \groupbest{69.34} & 25.38 & 35.06 & 0.74 \\
\hline
\multicolumn{11}{c}{\rule{0pt}{2.4ex}\textit{Black-box}} \\
\hline
\textbf{CBD} & \groupbest{34.36} & \groupbest{26.22} & \groupbest{79.84} & \groupbest{75.31} & \groupbest{2.49} & \groupbest{32.95} & \groupbest{25.59} & \groupbest{78.66} & \groupbest{74.76} & \groupbest{2.55} \\
\hline\hline
\end{tabular}
\label{tab:tofu_forget01_forget05_compact}
\end{table*}

\begin{table*}
\centering
\caption{Detailed experimental results on the ToFU dataset for unlearning 10\% of data. Within each method group, the best value in each column is boldfaced.}
\setlength{\tabcolsep}{5pt}
\renewcommand{\arraystretch}{1.1}
\footnotesize
\begin{tabular}{@{}l|ccc|ccc|ccc|ccc|cc@{}}
\hline\hline
\multirow{2}{*}{Method} & \multicolumn{3}{c|}{Forget data} & \multicolumn{3}{c|}{Retain data} & \multicolumn{3}{c|}{Real Authors} & \multicolumn{3}{c|}{World Facts} & \multicolumn{2}{c}{Summary} \\
\cline{2-15}
& RG$\downarrow$ & Pr$\downarrow$ & TR$\uparrow$ & RG$\uparrow$ & Pr$\uparrow$ & TR$\uparrow$ & RG$\uparrow$ & Pr$\uparrow$ & TR$\uparrow$ & RG$\uparrow$ & Pr$\uparrow$ & TR$\uparrow$ & MU$\uparrow$ & FRT$\uparrow$ \\
\hline
\multicolumn{15}{c}{\rule{0pt}{2.4ex}\textit{Standard}} \\
\hline
\textbf{Target LLM} & 91.85 & 78.40 & 51.90 & 91.37 & 96.31 & 94.30 & 92.30 & 58.04 & 74.05 & 89.17 & 51.15 & 66.02 & 75.37 & 0.89 \\
\textbf{Retrain LLM} & 39.90 & 32.78 & 69.18 & 87.66 & 94.19 & 92.97 & 92.05 & 56.44 & 72.65 & 90.03 & 48.17 & 62.15 & 73.12 & 2.01 \\
\hline
\multicolumn{15}{c}{\rule{0pt}{2.4ex}\textit{White-box}} \\
\hline
\textbf{GA} & 29.98 & 45.25 & 58.78 & 28.47 & 1.36 & 65.94 & 35.00 & 37.51 & 51.09 & 64.99 & 37.51 & 54.07 & 9.77 & 0.26 \\
\textbf{GA+GD} & 41.27 & 69.30 & 49.22 & 46.30 & 49.24 & \groupbest{97.54} & 75.90 & 38.12 & 54.08 & 85.40 & 38.67 & 53.47 & 54.11 & 0.98 \\
\textbf{GA+KL} & 37.04 & 70.65 & 50.72 & 37.17 & 26.45 & 94.75 & 71.03 & 33.90 & 46.79 & 84.33 & 38.34 & 52.75 & 45.66 & 0.85 \\
\textbf{DPO} & \groupbest{28.16} & 65.16 & 55.98 & 41.34 & 90.66 & 89.61 & 91.30 & 47.17 & 60.99 & 85.75 & 44.47 & 56.06 & 61.26 & 1.31 \\
\textbf{DPO+GD} & 41.59 & 67.23 & 55.51 & 64.30 & \groupbest{93.06} & 90.49 & \groupbest{92.30} & 46.77 & 60.14 & 86.61 & 44.31 & 56.03 & 65.14 & 1.20 \\
\textbf{DPO+KL} & 43.56 & 67.04 & 55.50 & \groupbest{66.07} & 93.01 & 90.39 & \groupbest{92.30} & 46.88 & 60.34 & 85.75 & 44.35 & 56.17 & \groupbest{65.36} & 1.18 \\
\textbf{NPO} & 28.94 & 44.68 & 63.55 & 29.07 & 26.89 & 80.31 & 87.97 & 45.54 & 59.57 & 87.04 & 43.76 & 56.33 & 48.32 & 1.31 \\
\textbf{NPO+GD} & 35.59 & 37.71 & 69.04 & 35.01 & 39.48 & 70.48 & 81.20 & \groupbest{53.13} & \groupbest{70.03} & \groupbest{88.11} & \groupbest{49.28} & \groupbest{63.34} & 55.93 & 1.53 \\
\textbf{NPO+KL} & 34.44 & \groupbest{35.91} & \groupbest{70.25} & 33.40 & 37.92 & 66.76 & 79.20 & 51.82 & 68.00 & 84.76 & 48.70 & 62.64 & 54.15 & \groupbest{1.54} \\
\hline
\multicolumn{15}{c}{\rule{0pt}{2.4ex}\textit{Gray-box}} \\
\hline
\textbf{ULD} & \groupbest{18.69} & 75.89 & \groupbest{54.85} & \groupbest{35.79} & \groupbest{60.73} & 94.84 & \groupbest{68.00} & \groupbest{57.89} & 72.66 & \groupbest{81.13} & \groupbest{57.45} & \groupbest{73.19} & \groupbest{62.46} & \groupbest{1.32} \\
\textbf{Offset} & 26.95 & \groupbest{75.33} & 27.35 & 34.71 & 9.88 & \groupbest{96.34} & 32.73 & 52.62 & \groupbest{74.54} & 55.73 & 48.24 & 72.73 & 35.18 & 0.69 \\
\hline
\multicolumn{15}{c}{\rule{0pt}{2.4ex}\textit{Black-box}} \\
\hline
\textbf{CBD} & \groupbest{38.19} & \groupbest{28.67} & \groupbest{76.50} & \groupbest{90.43} & \groupbest{94.63} & \groupbest{93.35} & \groupbest{92.30} & \groupbest{57.44} & \groupbest{73.18} & \groupbest{89.17} & \groupbest{51.15} & \groupbest{66.02} & \groupbest{74.90} & \groupbest{2.24} \\
\hline\hline
\end{tabular}
\label{tab:tofu10}
\end{table*}

\begin{table}
\centering
\caption{Experimental results on WMDP. Within each method group, the best value in each column is boldfaced.}
\setlength{\tabcolsep}{7.5pt}
\renewcommand{\arraystretch}{1.1}
\footnotesize
\begin{tabular}{@{}l|cccc|c@{}}
\hline\hline
Method & Overall$\downarrow$ & Bio$\downarrow$ & Cyber$\downarrow$ & Chem$\downarrow$ & MMLU$\uparrow$ \\
\hline
\multicolumn{6}{c}{\rule{0pt}{2.4ex}\textit{Standard}} \\
\hline
\textbf{Target LLM} & 50.95 & 65.28 & 42.17 & 49.02 & 59.01 \\
\hline
\multicolumn{6}{c}{\rule{0pt}{2.4ex}\textit{White-box}} \\
\hline
\textbf{GA} & 30.97 & 37.86 & 27.63 & 25.74 & 33.44 \\
\textbf{GA+GD} & 38.14 & 47.92 & 32.31 & 36.03 & 47.19 \\
\textbf{GA+KL} & \groupbest{26.44} & \groupbest{27.42} & \groupbest{26.07} & \groupbest{25.25} & 28.96 \\
\textbf{DPO} & 38.17 & 46.74 & 33.42 & 34.56 & 41.90 \\
\textbf{DPO+GD} & 46.56 & 60.49 & 37.80 & 45.83 & 55.21 \\
\textbf{DPO+KL} & 42.75 & 54.52 & 36.13 & 38.24 & 51.38 \\
\textbf{NPO} & 31.22 & 36.29 & 28.84 & 26.96 & 33.53 \\
\textbf{NPO+GD} & 42.39 & 54.67 & 34.52 & 42.40 & \groupbest{55.97} \\
\textbf{NPO+KL} & 31.00 & 32.44 & 31.05 & 26.23 & 32.31 \\
\hline
\multicolumn{6}{c}{\rule{0pt}{2.4ex}\textit{Gray-box}} \\
\hline
\textbf{ULD} & \groupbest{49.35} & \groupbest{65.51} & \groupbest{39.10} & \groupbest{48.77} & \groupbest{58.96} \\
\textbf{Offset} & 50.63 & 65.67 & 41.32 & 49.02 & 58.20 \\
\hline
\multicolumn{6}{c}{\rule{0pt}{2.4ex}\textit{Black-box}} \\
\hline
\textbf{CBD} & \groupbest{25.68} & \groupbest{25.77} & \groupbest{25.36} & \groupbest{26.96} & \groupbest{52.67} \\
\hline\hline
\end{tabular}
\label{tab:wmdp-main}
\end{table}

\begin{figure*}[!b]
\centering
\includegraphics[width=0.84\textwidth]{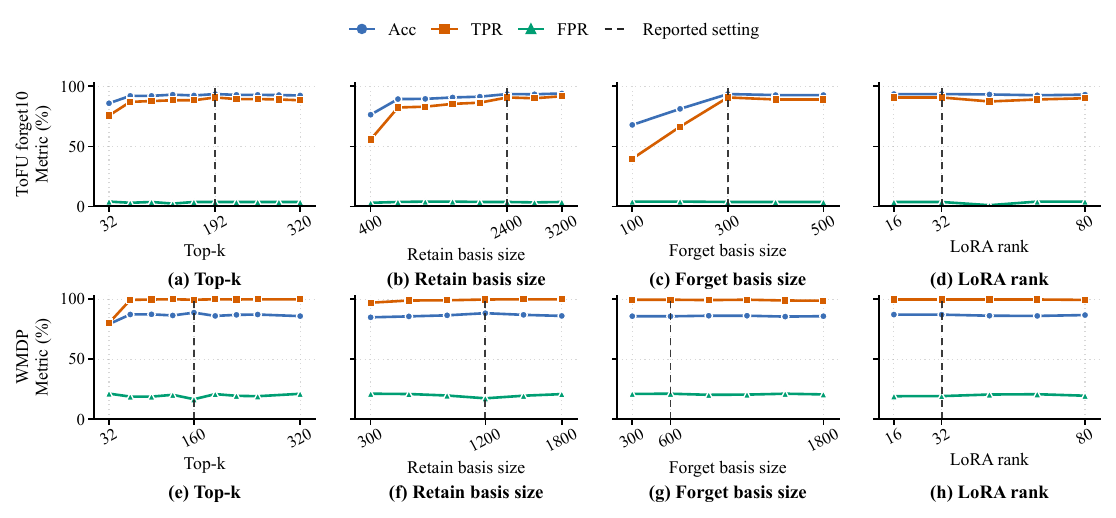}
\caption{Hyperparameter sensitivity of CBD on ToFU \textit{forget10} (top row) and WMDP (bottom row). The four columns vary the top-$k$ value, the retain basis size, the forget basis size, and the LoRA rank, respectively. The dashed line marks the configuration used in the main experiments.}
\label{fig:hparam}
\end{figure*}

\textbf{Performance Comparison with Existing Methods.} Table~\ref{tab:tofu_forget01_forget05_compact} compactly reports the forget-side and summary results under the \textit{forget01} and \textit{forget05} settings, while Table~\ref{tab:tofu10} provides the detailed results under the more challenging \textit{forget10} setting. The white-box methods trade off reducing answer recovery against preserving retained utility, and the trade-off sharpens as the unlearning ratio grows. GA+GD attains the best white-box FRT on \textit{forget01} at 3.36 but only at a reduced MU of 57.35, and on \textit{forget10} GA collapses to an MU of 9.77 with an FRT of 0.26. The DPO variants preserve more utility but leave forget-set answer probabilities well above the retrained reference, and the NPO variants stay closer to that reference but still above it.

The gray-box methods behave unevenly across the three settings. ULD is the strongest method on \textit{forget01}, where its FRT of 4.39 exceeds the 2.49 of CBD. However, ULD requires target-model logits that are unavailable in the API-only setting, and its advantage disappears on the larger splits, where its FRT falls to 1.42 and 1.32. Offset does not achieve consistent suppression, and its MU drops sharply on all three splits. CBD stays the most consistent across splits, keeping forget-set Pr between 25.59 and 28.67, forget-set TR above 76, and MU between 74.76 and 75.31 across all three splits. On \textit{forget10}, Table~\ref{tab:tofu10} further shows that the retained-side metrics of CBD stay close to the target model because retained queries that are not rerouted reach the unchanged target LLM, so the retained-utility cost of CBD comes only from false-positive routing.

Table~\ref{tab:wmdp-main} reports the WMDP results, where the same trade-off appears. GA+KL achieves the strongest white-box suppression with an overall hazardous accuracy of 26.44, but its MMLU drops to 28.96, while DPO+GD and NPO+GD preserve MMLU above 55 yet barely suppress hazardous accuracy. The gray-box baselines keep MMLU near the target LLM but achieve almost no suppression, with 49.35 for ULD and 50.63 for Offset. CBD reduces the overall hazardous accuracy to 25.68, close to the four-choice random-guess level, while preserving an MMLU accuracy of 52.67, giving the lowest hazardous accuracy among all compared methods at a smaller general-capability cost than GA+KL.

\textbf{Hyperparameter Sensitivity.} Figure~\ref{fig:hparam} varies the top-$k$ value, the retain basis size, the forget basis size, and the LoRA rank on ToFU \textit{forget10} and WMDP. On both datasets, the top-$k$ value has the largest effect, since a too-small value discards useful separation directions and lowers routing accuracy and true-positive rate, whereas beyond the main configuration the curves change only slightly and the false-positive rate stays low. The retain basis behaves similarly, as a small basis represents retained behavior incompletely and weakens routing. The forget basis matters on ToFU, where 100 or 200 forget-set samples are insufficient but 300 samples stabilize the curves. The LoRA rank has the smallest effect, since the routing accuracy and true-positive rate stay almost unchanged as it varies from 16 to 80.

\begin{figure*}[!t]
\centering
\includegraphics[width=0.86\textwidth]{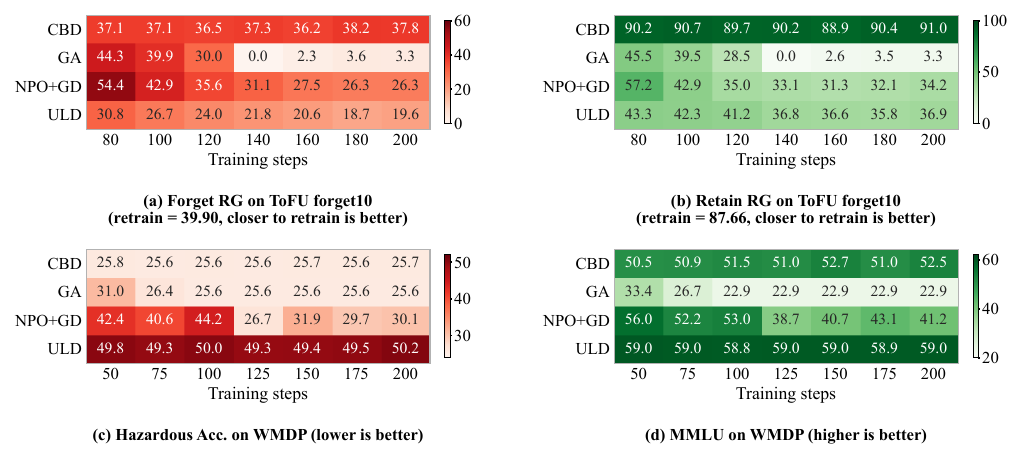}
\caption{Cross-method step stability on ToFU \textit{forget10} and WMDP. The heatmaps illustrate the stability of unlearning utility across training steps for different methods.}
\label{fig:step_stability}
\end{figure*}

\textbf{Cross-Method Step Stability.} Figure~\ref{fig:step_stability} reports unlearning metrics as functions of training steps for CBD, GA, NPO+GD, and ULD. On ToFU \textit{forget10}, we report forget-set and retain-set RG against the retrained reference values of 39.90 and 87.66. On WMDP, we report overall hazardous accuracy with a random-guess level of 25 and MMLU with a target value of 59.01.
CBD varies little across training steps on both datasets, with forget-set RG within $[36.23, 38.19]$ and retain-set RG around 90 on ToFU \textit{forget10}, and hazardous accuracy at 25.6--25.8 with MMLU at 50.5--52.7 on WMDP. In contrast, GA collapses, with both RG metrics dropping to 0.00 by step 140 on ToFU \textit{forget10} and MMLU falling to the random-guess level by step 100 on WMDP. NPO+GD degrades substantially as its retain-set RG falls from 57.17 to 34.21 between steps 80 and 200, and ULD remains flat but achieves little unlearning on WMDP, staying near 49.3 hazardous accuracy. Direct weight editing thus depends heavily on the training budget, while CBD does not.

\begin{figure}[!t]
\centering
\includegraphics[width=.8\columnwidth]{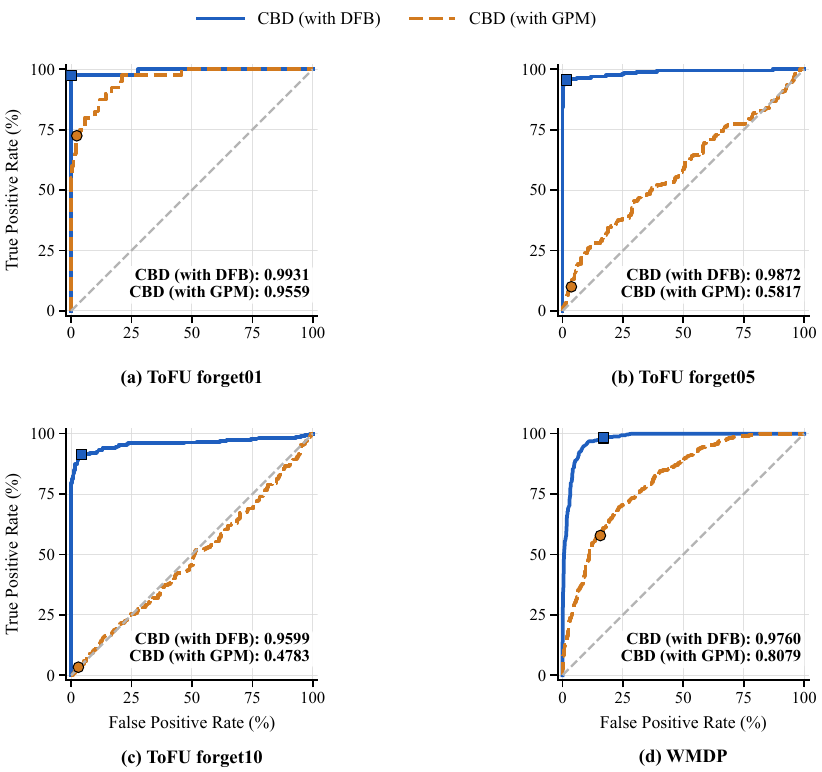}
\caption{Receiver operating characteristic curves comparing CBD (with DFB) and CBD (with GPM) across the three ToFU splits and WMDP, obtained by sweeping the routing threshold on the unlearning-relevance score. Square and circle markers denote the operating thresholds of CBD (with DFB) and CBD (with GPM), respectively.}
\label{fig:roc}
\end{figure}

\textbf{Routing Quality Comparison.} We isolate the effect of basis construction by comparing CBD (with DFB) and CBD (with GPM) under the same training budget. Figure~\ref{fig:roc} reports the resulting ROC curves, where a higher curve means more forget queries are correctly routed away at the same false-positive rate on retained queries. On ToFU, CBD (with DFB) attains AUC values of 0.9931, 0.9872, and 0.9599 on \textit{forget01}, \textit{forget05}, and \textit{forget10}, whereas CBD (with GPM) reaches 0.9559, 0.5817, and 0.4783. The AUC gap is about 0.04 on \textit{forget01} but 0.48 on \textit{forget10}, as the growing forget set covers more shared templates and tightens forget-retain coupling. At the operating threshold on \textit{forget10}, CBD (with GPM) detects almost no forget queries with a true-positive rate of 3.33, so most forget queries still reach the target model and routing fails to block them on the forget side. CBD (with DFB) instead reaches 95.00 routing accuracy with a true-positive rate of 93.00 and a false-positive rate of 3.00.

WMDP shows a similar result, where CBD (with DFB) achieves an AUC of 0.9760 and 88.72 routing accuracy at the operating threshold, compared with 0.8079 and 70.11 for CBD (with GPM). The discriminative Fisher basis therefore routes more accurately in exactly the cases where forget and retain queries are hard to separate. Across all experiments, the retained-utility cost of CBD is confined to false-positive routing, and each query adds only the cost of scoring by the two small auxiliary models before reaching the target LLM. Probe training and basis extraction take 152 to 185 seconds with a peak memory of 15.4 to 22.4 GB, which fits within a single 24 GB GPU.

\section{Conclusion}
\label{sec:conclusion}

This paper presents CBD, a black-box unlearning framework for API-only LLM services. CBD routes unlearning-related queries away from the target LLM using the behavioral divergence between a frozen reference model and a trained probe model, without editing the target or accessing its logits. A gradient-statistics-based discriminative basis improves this routing when forget and retain data are highly similar. Across ToFU and WMDP, CBD attains a better unlearning-utility trade-off than eleven baselines, improving the forget-retain trade-off by about 45\% over the second-best method on ToFU forget10 and lowering WMDP hazardous-knowledge accuracy to near the random-guess level while preserving MMLU.

\bibliographystyle{IEEEtran}
\bibliography{ref}

@article{qu2025mobile,
  title={Mobile Edge Intelligence for Large Language Models: A Contemporary Survey},
  author={Qu, Guanqiao and Chen, Qiyuan and Wei, Wei and Lin, Zheng and Chen, Xianhao and Huang, Kaibin},
  journal={IEEE Communications Surveys \& Tutorials},
  volume={27},
  pages={3820--3860},
  year={2025}
}

@inproceedings{carlini2021extracting,
  title={Extracting Training Data from Large Language Models},
  author={Carlini, Nicholas and others},
  booktitle={30th USENIX security symposium (USENIX Security 21)},
  pages={2633--2650},
  year={2021}
}

@inproceedings{yao2024machine,
  title={Machine unlearning of pre-trained large language models},
  author={Yao, Jin and others},
  booktitle={Proceedings of the 62nd Annual Meeting of the Association for Computational Linguistics (Volume 1: Long Papers)},
  pages={8403--8419},
  year={2024},
}

@article{liu2025rethinking,
  title={Rethinking Machine Unlearning for Large Language Models},
  author={Liu, Sijia and others},
  journal={Nature Machine Intelligence},
  volume={7},
  pages={181--194},
  year={2025}
}

@article{yao2024large,
  title={Large Language Model Unlearning},
  author={Yao, Yuanshun and Xu, Xiaojun and Liu, Yang},
  journal={Advances in Neural Information Processing Systems},
  volume={37},
  pages={105425--105475},
  year={2024},
}

@inproceedings{zhang2024negative,
  title={Negative preference optimization: From catastrophic collapse to effective unlearning},
  author={Zhang, Ruiqi and Lin, Licong and Bai, Yu and Mei, Song},
  booktitle={Proceedings of the First Conference on Language Modeling},
  year={2024},
}

@article{ji2024reversing,
  title={Reversing the Forget-Retain Objectives: An Efficient {LLM} Unlearning Framework from Logit Difference},
  author={Ji, Jiabao and others},
  journal={Advances in Neural Information Processing Systems},
  volume={37},
  pages={12581--12611},
  year={2024},
}

@article{huang2024offset,
  title={Offset Unlearning for Large Language Models},
  author={Huang, James Y. and others},
  journal={Transactions on Machine Learning Research},
  year={2025},
}

@inproceedings{bourtoule2021machine,
  title={Machine unlearning},
  author={Bourtoule, Lucas and others},
  booktitle={2021 IEEE symposium on security and privacy (SP)},
  pages={141--159},
  year={2021},
}

@article{rafailov2023direct,
  title={Direct Preference Optimization: Your Language Model is Secretly a Reward Model},
  author={Rafailov, Rafael and Sharma, Archit and Mitchell, Eric and Manning, Christopher D. and Ermon, Stefano and Finn, Chelsea},
  journal={Advances in Neural Information Processing Systems},
  volume={36},
  pages={53728--53741},
  year={2023}
}

@inproceedings{wang2024llm,
  title={{LLM} Unlearning via Loss Adjustment with Only Forget Data},
  author={Wang, Yaxuan and others},
  booktitle={International Conference on Learning Representations},
  year={2025},
}

@inproceedings{maini2024tofu,
  title={{TOFU}: A Task of Fictitious Unlearning for {LLM}s},
  author={Maini, Pratyush and Feng, Zhili and Schwarzschild, Avi and Lipton, Zachary C and Kolter, J Zico},
  booktitle={First Conference on Language Modeling},
  year={2024},
}

@inproceedings{hu2022lora,
  title={{LoRA}: Low-Rank Adaptation of Large Language Models},
  author={Hu, Edward J. and others},
  booktitle={International Conference on Learning Representations},
  year={2022}
}

@article{martens2020insights,
  title={New Insights and Perspectives on the Natural Gradient Method},
  author={Martens, James},
  journal={Journal of Machine Learning Research},
  volume={21},
  number={146},
  pages={1--76},
  year={2020}
}

@inproceedings{jang2023knowledge,
  title={Knowledge Unlearning for Mitigating Privacy Risks in Language Models},
  author={Jang, Joel and others},
  booktitle={Proceedings of the 61st Annual Meeting of the Association for Computational Linguistics (Volume 1: Long Papers)},
  pages={14389--14408},
  year={2023},
}

@inproceedings{chen2023forget,
  title={Unlearn What You Want to Forget: Efficient Unlearning for {LLM}s},
  author={Chen, Jiaao and Yang, Diyi},
  booktitle={Proceedings of the 2023 Conference on Empirical Methods in Natural Language Processing},
  pages={12041--12052},
  year={2023},
}

@inproceedings{kassem2023dememorization,
  title={Preserving Privacy through Dememorization: An Unlearning Technique for Mitigating Memorization Risks in Language Models},
  author={Kassem, Aly and Mahmoud, Omar and Saad, Sherif},
  booktitle={Proceedings of the 2023 Conference on Empirical Methods in Natural Language Processing},
  pages={4360--4379},
  year={2023}
}

@inproceedings{ren2025framework,
  title={A General Framework to Enhance Fine-tuning-based {LLM} Unlearning},
  author={Ren, Jie and others},
  booktitle={Findings of the Association for Computational Linguistics: ACL 2025},
  pages={18464--18476},
  year={2025}
}

@inproceedings{vasilev2025unilogit,
  title={Unilogit: Robust Machine Unlearning for {LLM}s Using Uniform-Target Self-Distillation},
  author={Vasilev, Stefan and Herold, Christian and Liao, Baohao and Hashemi, Seyyed Hadi and Khadivi, Shahram and Monz, Christof},
  booktitle={Findings of the Association for Computational Linguistics: ACL 2025},
  pages={22453--22472},
  year={2025},
}

@inproceedings{takashiro2025answer,
  title={Answer When Needed, Forget When Not: Language Models Pretend to Forget via In-Context Knowledge Unlearning},
  author={Takashiro, Shota and Kojima, Takeshi and Gambardella, Andrew and Cao, Qi and Iwasawa, Yusuke and Matsuo, Yutaka},
  booktitle={Findings of the Association for Computational Linguistics: ACL 2025},
  pages={24872--24885},
  year={2025},
}

@inproceedings{gu2025concealment,
  title={From Evasion to Concealment: Stealthy Knowledge Unlearning for {LLM}s},
  author={Gu, Tianle and others},
  booktitle={Findings of the Association for Computational Linguistics: ACL 2025},
  pages={10261--10279},
  year={2025}
}

@inproceedings{niwa2025belief,
  title={Rectifying Belief Space via Unlearning to Harness {LLM}s' Reasoning},
  author={Niwa, Ayana and Kaneko, Masahiro and Inui, Kentaro},
  booktitle={Findings of the Association for Computational Linguistics: ACL 2025},
  pages={25060--25075},
  year={2025}
}

@inproceedings{dou2025copyright,
  title={Avoiding Copyright Infringement via Large Language Model Unlearning},
  author={Dou, Guangyao and Liu, Zheyuan and Lyu, Qing and Ding, Kaize and Wong, Eric},
  booktitle={Findings of the Association for Computational Linguistics: NAACL 2025},
  pages={5191--5215},
  year={2025},
}

@inproceedings{wan2025token,
  title={Not Every Token Needs Forgetting: Selective Unlearning Balancing Forgetting and Utility in Large Language Models},
  author={Wan, Yixin and Ramakrishna, Anil and Chang, Kai-Wei and Cevher, Volkan and Gupta, Rahul},
  booktitle={Findings of the Association for Computational Linguistics: EMNLP 2025},
  pages={1827--1835},
  year={2025},
}

@inproceedings{xie2025reveal,
  title={Reveal and Release: Iterative {LLM} Unlearning with Self-Generated Data},
  author={Xie, Linxi and Teng, Xin and Ke, Shichang and Wen, Hongyi and Wan, Shenji},
  booktitle={Findings of the Association for Computational Linguistics: EMNLP 2025},
  pages={23887--23899},
  year={2025},
}

@inproceedings{tian2024forget,
  title={To Forget or Not? Towards Practical Knowledge Unlearning for Large Language Models},
  author={Tian, Bozhong and others},
  booktitle={Findings of the Association for Computational Linguistics: EMNLP 2024},
  pages={1524--1537},
  year={2024},
}

@article{liu2024eco,
  title={Large Language Model Unlearning via Embedding-Corrupted Prompts},
  author={Liu, Chris Yuhao and Wang, Yaxuan and Flanigan, Jeffrey and Liu, Yang},
  journal={Advances in Neural Information Processing Systems},
  volume={37},
  year={2024},
}

@inproceedings{patil2024sensitive,
  title={Can Sensitive Information Be Deleted From {LLM}s? Objectives for Defending Against Extraction Attacks},
  author={Patil, Vaidehi and Hase, Peter and Bansal, Mohit},
  booktitle={International Conference on Learning Representations},
  year={2024}
}

@inproceedings{jia2024soul,
  title={{SOUL}: Unlocking the Power of Second-Order Optimization for {LLM} Unlearning},
  author={Jia, Jinghan and others},
  booktitle={Proceedings of the 2024 Conference on Empirical Methods in Natural Language Processing},
  pages={4276--4292},
  year={2024}
}

@inproceedings{gao2025continual,
  title={On Large Language Model Continual Unlearning},
  author={Gao, Chongyang and Wang, Lixu and Ding, Kaize and Weng, Chenkai and Wang, Xiao and Zhu, Qi},
  booktitle={International Conference on Learning Representations},
  year={2025}
}

@inproceedings{pawelczyk2024incontext,
  title={In-Context Unlearning: Language Models as Few-Shot Unlearners},
  author={Pawelczyk, Martin and Neel, Seth and Lakkaraju, Himabindu},
  booktitle={Proceedings of the 41st International Conference on Machine Learning},
  series={Proceedings of Machine Learning Research},
  volume={235},
  pages={40034--40050},
  year={2024}
}

@inproceedings{bhaila2025spul,
  title={Soft Prompting for Unlearning in Large Language Models},
  author={Bhaila, Karuna and Van, Minh-Hao and Wu, Xintao},
  booktitle={Proceedings of the 2025 Conference of the Nations of the Americas Chapter of the Association for Computational Linguistics: Human Language Technologies (Volume 1: Long Papers)},
  pages={4046--4056},
  year={2025}
}

@inproceedings{muresanu2025fast,
  title={Fast Exact Unlearning for In-Context Learning Data for {LLM}s},
  author={Muresanu, Andrei Ioan and Thudi, Anvith and Zhang, Michael R. and Papernot, Nicolas},
  booktitle={Proceedings of the 42nd International Conference on Machine Learning},
  series={Proceedings of Machine Learning Research},
  volume={267},
  pages={45272--45288},
  year={2025}
}

@inproceedings{saha2021gradient,
  title={Gradient Projection Memory for Continual Learning},
  author={Saha, Gobinda and Garg, Isha and Roy, Kaushik},
  booktitle={International Conference on Learning Representations},
  pages={944--961},
  year={2021}
}

@inproceedings{li2024wmdp,
  title={The {WMDP} Benchmark: Measuring and Reducing Malicious Use With Unlearning},
  author={Li, Nathaniel and others},
  booktitle={Proceedings of the 41st International Conference on Machine Learning},
  series={Proceedings of Machine Learning Research},
  volume={235},
  pages={28525--28550},
  year={2024}
}

@inproceedings{hendrycks2021mmlu,
  title={Measuring Massive Multitask Language Understanding},
  author={Hendrycks, Dan and others},
  booktitle={International Conference on Learning Representations},
  year={2021}
}

@article{touvron2023llama,
  title={Llama 2: Open Foundation and Fine-Tuned Chat Models},
  author={Touvron, Hugo and others},
  journal={arXiv preprint arXiv:2307.09288},
  year={2023}
}

@article{tunstall2023zephyr,
  title={Zephyr: Direct Distillation of {LM} Alignment},
  author={Tunstall, Lewis and others},
  journal={arXiv preprint arXiv:2310.16944},
  year={2023}
}

@article{zhang2024tinyllama,
  title={TinyLlama: An Open-Source Small Language Model},
  author={Zhang, Peiyuan and Zeng, Guangtao and Wang, Tianduo and Lu, Wei},
  journal={arXiv preprint arXiv:2401.02385},
  year={2024}
}

@article{wang2025erasing,
  title={Erasing Without Remembering: Implicit Knowledge Forgetting in Large Language Models},
  author={Wang, Huazheng and others},
  journal={arXiv preprint arXiv:2502.19982},
  year={2025}
}

@inproceedings{jia2024wagle,
  title={{WAGLE}: Strategic Weight Attribution for Effective and Modular Unlearning in Large Language Models},
  author={Jia, Jinghan and Liu, Jiancheng and Zhang, Yihua and Ram, Parikshit and Baracaldo, Nathalie and Liu, Sijia},
  booktitle={Advances in Neural Information Processing Systems},
  year={2024}
}

@inproceedings{fan2025simplicity,
  title={Simplicity Prevails: Rethinking Negative Preference Optimization for {LLM} Unlearning},
  author={Fan, Chongyu and others},
  booktitle={Advances in Neural Information Processing Systems},
  year={2025}
}

\end{document}